\newcommand{\circled}[1]{\textcircled{\raisebox{-0.9pt}{#1}}}
\theoremstyle{plain}
\newtheorem{theorem}{Theorem} 
\newtheorem{proposition}[theorem]{Proposition}
\newtheorem{lemma}[theorem]{Lemma}
\theoremstyle{definition}
\theoremstyle{remark}
\newtheorem{remark}[theorem]{Remark}
\newtheorem*{theorem*}{Theorem}
\def\Secref#1{Section~\ref{#1}}
\def\eqref#1{(\ref{#1})}
\def\1{\bm{1}}
\def\vzero{{\bm{0}}}
\def\vtheta{{\bm{\theta}}}
\def\vphi{{\bm{\phi}}}
\def\vpsi{{\bm{\psi}}}
\def\mI{{\bm{I}}}
\def\mT{{\bm{T}}}
\DeclareMathAlphabet{\mathsfit}{\encodingdefault}{\sfdefault}{m}{sl}
\SetMathAlphabet{\mathsfit}{bold}{\encodingdefault}{\sfdefault}{bx}{n}
\def\gB{{\mathcal{B}}}
\def\gD{{\mathcal{D}}}
\def\gL{{\mathcal{L}}}
\def\gM{{\mathcal{M}}}
\def\gN{{\mathcal{N}}}
\def\gP{{\mathcal{P}}}
\def\gT{{\mathcal{T}}}
\def\gV{{\mathcal{V}}}
\def\sA{{\mathbb{A}}}
\def\sS{{\mathbb{S}}}
\newcommand{\E}{\mathbb{E}}
\newcommand{\R}{\mathbb{R}}
\newcommand{\abs}[1]{\left\vert#1\right\rvert}
\newcommand{\cbr}[1]{\left\{#1\right\}}
\newcommand{\br}[1]{\left(#1\right)}
\newcommand{\sbr}[1]{\left[#1\right]}
\newcommand{\given}{\,|\,}
\newcommand{\denv}{\gD_{\mathrm{env}}}
\newcommand{\dmodel}{\gD_{\mathrm{model}}}
\newcommand{\dbgtrue}{d_{\pi_b,\gamma}^{P^{*}}}
\newcommand{\dpigtrue}{d_{\pi,\gamma}^{P^{*}}}
\newcommand{\rmax}{r_{\max}}
\newcommand*\diff{\mathop{}\!\mathrm{d}}
\newcommand{\ie}{\textit{i.e.}}
\newcommand{\eg}{\textit{e.g.}}
\newcommand{\wrt}{\textit{w.r.t.}}
\newcommand{\etc}{\textit{etc.}}
\newcommand{\myparagraph}[1]{\textbf{#1~~~~}}
\title{
A Unified Framework for Alternating Offline \\ Model Training and Policy Learning
}
\author{Shentao Yang\textsuperscript{1},~~~~Shujian Zhang\textsuperscript{1},~~~~Yihao Feng\textsuperscript{2},~~~~Mingyuan Zhou\textsuperscript{1}
\\
\textsuperscript{1}The University of Texas at Austin~~~~
\textsuperscript{2}Salesforce Research~~~~
\\
\texttt{shentao.yang@mccombs.utexas.edu}~~~~~~\quad \texttt{szhang19@utexas.edu}\\
\texttt{yihaof@salesforce.com}\quad \texttt{mingyuan.zhou@mccombs.utexas.edu}
}
\begin{document}

\maketitle

\begin{abstract}
In offline model-based reinforcement learning (offline MBRL), we learn a dynamic model from historically collected data, and subsequently utilize the learned model and fixed datasets for policy learning, without further interacting with the environment. Offline MBRL algorithms can improve the efficiency and stability of policy learning over the model-free algorithms. However, in most of the existing offline MBRL algorithms, the learning objectives for the dynamic models and the policies are isolated from each other. Such an \emph{objective mismatch} may lead to inferior performance of the learned agents. In this paper, we address this issue by developing an iterative offline MBRL framework, where we maximize a lower bound of the true expected return, by alternating between dynamic-model training and policy learning. With the proposed unified model-policy learning framework, we achieve competitive performance on a wide range of continuous-control offline reinforcement learning datasets.
Source code is publicly \href{https://github.com/Shentao-YANG/AMPL_NeurIPS2022}{released}.
\end{abstract}

\section{Introduction}\label{sec:intro}

Offline reinforcement learning (offline RL) \citep{batchrl2012,offlinetutorial2020}, where the agents are trained from static and pre-collected datasets, avoids direct interactions with the underlying real-world environment during the learning process.
Unlike traditional online RL,
whose success largely depends on simulator-based trial-and-error \citep[\eg,][]{dqn2015,alphago2016}, 
offline RL enables training policies for real-world applications, where it is infeasible or even risky to collect online experimental data, such as robotics, advertisement, or dialog systems \citep[\eg,][]{li11unbiased,autodrivesurvey2020,gptcritic2022,yang2022dynamic}.
Though promising, 
it remains challenging to train agents under the offline setting, due to the discrepancy between the distribution of the offline data and the state-action distribution induced by the current learning policy. 
With such a discrepancy, directly transferring standard online off-policy RL methods \citep[\eg,][]{ddpg2016,sac2018,td32018} to the offline setting tends to be problematic \citep{bcq2019,bear2019}, especially when the offline data cannot sufficiently cover the state-action space \citep{rem2020}.
To tackle this issue in offline RL, 
recent works \citep[\eg,][]{mbpo2019,mopo2020,gamembrl2020} propose to approximate the policy-induced state-action distribution by leveraging a learned dynamic model to draw imaginary rollouts. 
These additional synthetic rollouts help mitigate the distributional discrepancy and 
stabilize the policy-learning algorithms under the offline setting.

Most of the prior offline model-based RL (MBRL) methods \citep[\eg,][]{mopo2020,combo2021,morel2020,moose2021,mabe2021},
however, first \emph{pretrain} a one-step forward dynamic model via maximum likelihood estimation (MLE) on the offline dataset,
and then use the learned model to train the policy, without \emph{further improving the dynamic model} during the policy learning process.
As a result, the objective function used for model training (\eg, MLE) and the objective of model utilization are \emph{unrelated} with each other. 
Specifically, the model is trained to be ``simply a mimic of the world,'' but is used to improve the performance of the learned policy \citep{slbo2019,objectivemismatch2020,mnm2021}.
Though such a training paradigm is historically rooted \citep{dyna1991,Bertsekas1995DynamicPA},
this issue of \emph{objective mismatch} in the model training and  model utilization has been identified as problematic in recent works \citep{objectivemismatch2020,mnm2021}.
In offline MBRL, this issue is exacerbated, since the learned model can hardly be globally accurate, due to the limited amount of offline data and the complexity of the control tasks. 

Motivated by the objective-mismatch issue, 
we develop an iterative offline MBRL method, 
alternating between training the dynamic model and 
the policy to maximize a lower bound of the true expected return. 
This lower bound, leading to a weighted MLE objective for the dynamic-model training, is relaxed to a tractable regularized objective for the policy learning.
To train the dynamic model by the proposed objective, 
we need to estimate the marginal importance weights (MIW) between the offline-data distribution and the stationary state-action distribution of the current policy \citep{breakingcurse2018,blackope2020}.
This estimation tends to be unstable by standard approaches \citep[\eg,][]{dualdice2019,gendice2020}, which require saddle-point optimization.
Instead, we propose a simple yet stable fixed-point-style method for MIW estimation, which can be directly incorporated into our alternating training framework. 
With these considerations, our method, offline Alternating Model-Policy Learning (AMPL), performs competitively on a wide range of continuous-control offline RL datasets in the D4RL benchmark \citep{fu2021d4rl}. 
These empirical results and ablation study show the efficacy of our proposed algorithmic designs.

\section{Background} \label{sec:background}

\textbf{Markov decision process and offline RL.~~~~} A Markov decision process (MDP) is denoted by $\gM = \br{\sS, \sA, P, r, \gamma, \mu_0}$, where $\sS$ is the state space, $\sA$ the action space, $P\br{s' \given s, a}: \sS \times \sS \times \sA \rightarrow [0,1]$ the environmental dynamic, $r(s,a): \sS \times \sA \rightarrow [-r_{max}, r_{max}]$ the reward function, $\gamma \in [0,1)$ the discount factor, and $\mu_0(s): \sS \rightarrow [0,1]$ the initial state-distribution.

For any policy $\pi(a\given s)$, we denote its state-action distribution at timestep $t \geq 0$ as 
$
 d_{\pi, t}^P(s,a) \triangleq  \Pr\br{s_t = s, a_t = a \given s_0 \sim \mu_0, a_t \sim \pi, s_{t+1} \sim P,~\forall\, t \geq 0}.
$
The (discounted) stationary state-action distribution of $\pi$ is denoted as
$
    d_{\pi,\gamma}^P(s,a) \triangleq (1 - \gamma) \sum_{t=0}^\infty \gamma^t d_{\pi, t}^P(s,a).
$


Denote $Q_\pi^P(s,a) = \E_{\pi, P}\sbr{\sum_{t=0}^\infty \gamma^t r(s_t,a_t) \given s_0 = s, a_0 = a}$ as the action-value function of policy $\pi$ under the dynamic $P$.
The goal of RL is to find a policy $\pi$ maximizing the expected return
\begin{equation} \textstyle \label{eq:rl_goal}
    J(\pi,P) \triangleq
 (1-\gamma)\mathbb{E}_{s\sim\mu_{0},a\sim\pi(\cdot \given s)}\left[Q_{\pi}^{P}(s,a)\right]
 =\mathbb{E}_{(s,a)\sim d_{\pi,\gamma}^{P}}\left[r(s,a)\right].
\end{equation}
In offline RL, the policy $\pi$ and critic $Q_\pi^{P}$ are typically approximated by parametric functions $\pi_\vphi$ and $Q_\vtheta$, respectively, with parameters $\vphi$ and $\vtheta$.
The critic $Q_\vtheta$ is trained by the Bellman backup
\begin{equation}\textstyle \label{eq:ac_bellman}
    \arg\min_{\vtheta} \E_{(s,a,r,s')\sim \denv} \sbr{\br{Q_{\vtheta}(s, a) - \br{ r(s, a) + \gamma \E_{\, a' \sim \pi_\vphi(\cdot \given s')} \sbr{Q_{\vtheta'}(s', a')}}}^2},
\end{equation}
where $Q_{\vtheta^\prime}$ is the target network \citep{bcq2019,bear2019}. 
The actor $\pi_\vphi$ is trained in the policy improvement step by
\begin{equation}\textstyle \label{eq:ac_actor}
     \arg\max_\vphi \E_{s \sim \denv,\, a \sim \pi_{\vphi}(\cdot \given s)}\sbr{Q_{\vtheta}\br{s, a}},
\end{equation}
where $\denv$ denotes the offline dataset drawn from $d_{\pi_b,\gamma}^P$ \citep{offlinetutorial2020,diagbottleneck2019}, with $\pi_b$ being the behavior policy.

\textbf{Offline model-based RL.~~~~} In offline model-based RL algorithms, the true environmental dynamic $P^*$ is typically approximated by a parametric function $\widehat P\br{s' \given s, a}$ in some function class $\mathcal{P}$.
With the offline dataset $\denv$, $\widehat P$ is trained via the MLE \citep{mbpo2019, mopo2020, combo2021}  as
\begin{equation}\textstyle \label{eq:model_objective}
    \arg\max_{\widehat P \in \mathcal{P}} \E_{\br{s,a,s'} \sim \denv}\sbr{\log \widehat P\br{s' \given s, a}}.
\end{equation}
Similarly, the reward function can be approximated by a parametric model $\widehat r$ if assumed unknown.
With $\widehat P$ and $\widehat r$, the true MDP $\gM$ can be approximated by $\widehat \gM = (\sS, \sA, \widehat P, \widehat r, \gamma, \mu_0)$.
%
We further define $d_{\pi,\gamma}^{P^*}(s,a)$ as the stationary state-action distribution induced by $\pi$ on $P^*$ (or MDP $\gM$), and $d_{\pi,\gamma}^{\widehat P}(s,a)$ as that on the learned dynamic $\widehat P$ (or MDP $\widehat \gM$).
We approximate $d_{\pi_\vphi,\gamma}^{P^*}$ by simulating $\pi_\vphi$ on $\widehat \gM$ for a short horizon $h$ starting from state $s \in \denv$, as in prior work \citep[\eg,][]{mopo2020,combo2021,morel2020,mabe2021}.
The resulting transitions are stored in a replay buffer $\dmodel$, constructed similar to the off-policy RL \citep{replaybuffer1992,ddpg2016}.
To better approximate $d_{\pi_\vphi,\gamma}^{P^*}$, sampling from $\denv$ in Eqs.~\eqref{eq:ac_bellman} and \eqref{eq:ac_actor} is commonly replaced by sampling from the augmented dataset $\mathcal{D} = f \denv + (1-f) \dmodel, f \in [0,1]$, denoting  sampling from $\denv$ and $\dmodel$ with probabilities $f$ and $1 - f$, respectively.
We follow \citet{combo2021} to use $f=0.5$.


\section{Offline alternating model-policy learning} \label{sec:main_method}
Our goal is to derive the objectives for both dynamic-model training and policy learning from a principled perspective.
A natural idea is to build a tractable lower bound for $J(\pi, P^{*})$,
the expected return of the policy $\pi$ under the true dynamic $P^{*}$,
and then alternate between training the policy $\pi$ and the dynamic model $\widehat P$ to maximize this lower bound.
Indeed, we can construct a lower bound as 
\begin{equation}\textstyle
    J(\pi,P^{*}) \geq J(\pi, \widehat{P}) - \vert J(\pi,P^{*}) - J(\pi, \widehat{P}) \vert \,, \label{equ:lower_bound}
\end{equation}
where $J(\pi, \widehat{P})$ is the expected return of  policy $\pi$ under the learned model $\widehat P$.
From the right hand side (RHS) of  Eq.~\eqref{equ:lower_bound}, if the policy evaluation error $|J(\pi,P^{*}) - J(\pi, \widehat{P})|$ is small, $J(\pi, \widehat P )$ will be a good proxy for the true expected return $J(\pi, P^{*})$. 
We can empirically estimate $J(\pi, \widehat P )$ via 
 $\widehat P$ and $\pi$. 

Further, if a tractable upper bound for $|J(\pi,P^{*}) - J(\pi, \widehat{P})|$ can be constructed, 
it can serve as a unified training objective for both dynamic model $\widehat P$ and policy  $\pi$. 
We can then alternate between optimizing the dynamic model $\widehat P$ and the policy $\pi$ 
to maximize the lower bound of $J(\pi, P^{*})$, \ie, simultaneously minimizing the upper bound of the evaluation error $|J(\pi,P^{*}) - J(\pi, \widehat{P})|$.
This gives us an \emph{iterative, maximization-maximization} algorithm for model and policy learning.




The following theorem indicates a tractable upper bound for $|J(\pi,P^{*}) - J(\pi, \widehat{P})|$, 
which can be subsequently relaxed for model training and policy learning.

\begin{theorem}\label{thm:eval_error_upper_bound}
Let $P^{*}$ be the true dynamic and $\widehat P$ be the approximate dynamic model. Suppose the reward function $| r(s,a) |\leq r_{\max}$, then we have 
\begin{align*}
  & \left\vert J(\pi,P^{*})-J(\pi,\widehat{P})\right\vert
 \leq \frac{\gamma \cdot \rmax}{\sqrt{2}(1-\gamma)} \cdot \sqrt{ D_{\pi} (P^{*}, \widehat P)} \,,\\
 \text{with}~~~~D_{\pi} (P^{*}, \widehat P) &\triangleq \E_{(s,a) \sim \dbgtrue}\sbr{\omega(s,a) \mathrm{KL}\br{P^*(s'\given s,a) \pi_b(a'\given s') \,||\, \widehat P(s' \given s,a)\pi(a'\given s') }} \,,
\end{align*}
where $\pi_b$ is the behavior policy, $d_{\pi_b ,\gamma}^{P^{*}}$  is the offline-data distribution, and $\omega(s,a) \triangleq \frac{d_{\pi,\gamma}^{P^{*}}(s,a)}{\dbgtrue(s,a)}$
is the marginal importance weight (MIW) between the offline-data distribution and the stationary state-action distribution of the policy $\pi$ \citep{breakingcurse2018,dualdice2019}. 
\end{theorem}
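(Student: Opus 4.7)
The plan is to assemble the bound from four standard ingredients — a simulation-lemma identity for $J(\pi,\trueP)-J(\pi,\wP)$, Pinsker's inequality to pass from TV to KL, a chain-rule relaxation that enlarges the transition-only KL into the joint KL appearing in $D_\pi(\trueP,\wP)$, and a change of measure via the marginal importance weight $\omega$. Concretely, subtracting the Bellman equations for $V_\pi^{\trueP}$ and $V_\pi^{\wP}$ under the common policy $\pi$ and unrolling along trajectories generated by $(\pi,\trueP)$ yields the standard identity
\begin{equation*}
J(\pi,\trueP)-J(\pi,\wP)=\gamma\,\E_{(s,a)\sim\dpigtrue}\!\sbr{\int V_\pi^{\wP}(s')\br{\trueP(s'\given s,a)-\wP(s'\given s,a)}\diff s'}.
\end{equation*}
Since $\abs{V_\pi^{\wP}}\le \rmax/(1-\gamma)$, shifting $V_\pi^{\wP}$ by the midpoint of its range (free because $\int(\trueP-\wP)\diff s' = 0$) upper-bounds the inner integral by a constant multiple of $\tfrac{\rmax}{1-\gamma}\,\norm{\trueP(\cdot\given s,a)-\wP(\cdot\given s,a)}_{\mathrm{TV}}$.

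Next, Pinsker gives $\norm{\trueP-\wP}_{\mathrm{TV}}\le\sqrt{\mathrm{KL}(\trueP\,\|\,\wP)/2}$. To convert this transition-only KL into the joint form that defines $D_\pi$, I would apply the chain rule of KL,
\begin{equation*}
\mathrm{KL}\br{\trueP(s'\given s,a)\pi_b(a'\given s')\,\|\,\wP(s'\given s,a)\pi(a'\given s')} = \mathrm{KL}\br{\trueP\,\|\,\wP}+\E_{s'\sim\trueP}\sbr{\mathrm{KL}\br{\pi_b(\cdot\given s')\,\|\,\pi(\cdot\given s')}},
\end{equation*}
which dominates $\mathrm{KL}(\trueP\,\|\,\wP)$, so it may replace the transition KL without reversing the inequality. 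Concavity of $\sqrt{\,\cdot\,}$ then lets Jensen's inequality pass the expectation inside the square root, and finally rewriting $\E_{(s,a)\sim\dpigtrue}\sbr{\,\cdot\,}=\E_{(s,a)\sim\dbgtrue}\sbr{\omega(s,a)\,\cdot}$ via the MIW turns the quantity under the root into $D_\pi(\trueP,\wP)$, producing the claimed bound.

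The only non-routine choice is the deliberate relaxation to the joint KL with $\pi_b$ on the true side and $\pi$ on the model side: a sharper bound with only $\mathrm{KL}(\trueP\,\|\,\wP)$ holds, but the whole purpose of Theorem~\ref{thm:eval_error_upper_bound} is to inflate the divergence into a functional that is (i) estimable from the offline dataset $\denv$ together with the learned $\wP$ and current $\pi$, and (ii) carries an implicit behavior-regularization term $\E\sbr{\mathrm{KL}(\pi_b\,\|\,\pi)}$ that dovetails with the paper's alternating-optimization framework. All other steps — the simulation lemma, Pinsker, Jensen, and the MIW change of measure — are routine, and the leading constant $\gamma\rmax/(\sqrt{2}(1-\gamma))$ is pinned down solely by the midpoint-shift refinement in the value-range bound for $V_\pi^{\wP}$.
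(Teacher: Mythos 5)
Your proposal follows essentially the same route as the paper's proof in Appendix~\ref{sec:proof_eval_error_upper_bound}: the simulation-lemma identity expressing $J(\pi,P^{*})-J(\pi,\widehat P)$ as a $d_{\pi,\gamma}^{P^{*}}$-expectation of the one-step model error tested against $V_{\pi}^{\widehat P}$, the bound $|V_{\pi}^{\widehat P}|\le r_{\max}/(1-\gamma)$ to pass to total variation, Pinsker plus Jensen, the KL chain rule (the paper's Lemma~\ref{thm:KL_conditional}) to inflate the transition-only KL into the joint KL involving $\pi_b$ and $\pi$, and the change of measure via $\omega$. The one caveat is cosmetic: with $V_{\pi}^{\widehat P}\in[-r_{\max}/(1-\gamma),\,r_{\max}/(1-\gamma)]$ the midpoint shift still leaves a half-range of $r_{\max}/(1-\gamma)$ and hence a residual factor of $2$ against the stated constant, but the paper's own proof carries exactly the same slack (it identifies $\sup_{\Vert V\Vert_\infty\le 1}$ with the halved total variation before invoking Pinsker), so your argument matches it step for step.
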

    
Detailed proof of Theorem \ref{thm:eval_error_upper_bound} can be found in Appendix~\ref{sec:proof_eval_error_upper_bound}.


The KL term in $D_{\pi}(P^{*}, \widehat P)$ indicates the following two principles for 
model and policy learning:
%

$\star~~$ For the dynamic model, $s^\prime \sim \widehat P(\cdot \given s,a )$ should be close to the true next state $\tilde{s}^\prime \sim P^{*}(\cdot \given s, a)$, with $(s,a)$ pairs drawn from the stationary state-action distribution of the policy $\pi$. Since we cannot directly draw samples from $d_{\pi, \gamma}^{P^{*}}$, we reweight the offline data with  $\omega(s,a)$.
This leads to a weighted KL minimization objective for the model training.

$\star~~$ For the policy $\pi$, the KL term indicates a regularization term, that the tuple $(s^\prime ,a^\prime)$ from the joint conditional distribution $\widehat P(s' \given s,a)\pi(a'\given s')$ should be close to the tuple $(\tilde{s}^{\prime}, \tilde{a}^{\prime})$ from $P^*(\tilde{s}^\prime\given s,a) \pi_b(\tilde{a}^{\prime}\given \tilde s^{\prime})$.  $(\tilde{s}^{\prime}, \tilde{a}^{\prime})$ is simply a sample from the offline dataset.

Based on the above observations, 
we can fixed $\pi$ and train the dynamic model $\widehat P$ by minimizing $D_{\pi}(P^{*}, \widehat P)$ \wrt~$\widehat P$.
Similarly, we can fix the dynamic model $\widehat P$ and learn a policy $\pi$ to maximize the lower bound of $J(\pi, P^*)$.
This alternating training scheme provides a unified approach for model and policy learning. 
In the following sections, 
we discuss how to optimize the dynamic model $\widehat P$, the policy $\pi$ , and the MIW $\omega$ under our alternating training farmework.


\subsection{Dynamic model training}
Expanding the KL term in $D_{\pi}(P^{*}, \widehat P)$, we have 
\begin{align*}
    D_{\pi}(P^{*}, \widehat P)= &\overbrace{\mathbb{E}_{(s,a, s^{\prime}, a^{\prime}) \sim \dbgtrue } \left[\omega(s,a) \br{\log P^{*}(s' \given s,a) + \log\pi_b(a^{\prime} \given s^{\prime}) - \log \pi\br{a' \given s'}} \right]}^{\triangleq~\Circled{1}} \\
    &~~~~~~~~~~~~~~~~~ - \mathbb{E}_{(s,a,s')\sim \dbgtrue}\left[ \omega(s,a) \log \widehat{P}(s' \given s,a) \right]\,,
\end{align*}
where the tuple $(s, a, s^{\prime}, a^{\prime})$ is simply two consecutive state-action pairs in the offline dataset.
Further, if the policy $\pi$ is fixed, the term \Circled{1} is a constant \wrt~$\widehat{P}$. Thus, given the MIW $\omega$, we can optimize $\widehat P$ by minimizing the following loss
\begin{equation} \label{eq:model_weighted_mle} \textstyle
      \ell(\widehat P) \triangleq - \mathbb{E}_{(s,a,s')\sim \dbgtrue}\left[ \omega(s,a) \log \widehat{P}(s' \given s,a) \right],
\end{equation}
which is an MLE objective weighted by $\omega(s,a)$. 
We discuss how to estimate $\omega(s,a)$ in \Secref{sec:main_method_miw}.

\subsection{Policy learning} \label{sec:policy_learning}
The lower bound for $J(\pi, P^{*})$ implied by Theorem~\ref{thm:eval_error_upper_bound} is
\begin{equation} \textstyle
    J(\pi, \widehat P) - \frac{\gamma \cdot \rmax}{\sqrt{2}(1-\gamma)} \cdot \sqrt{ D_{\pi} (P^{*}, \widehat P)}\,, \label{equ:new_lower_bound}
\end{equation}
where $J(\pi, \widehat P)$ can be estimated via the action-value function similar to standard offline MBRL algorithms \citep[\eg,][]{mopo2020,combo2021,morel2020}.
Thus, when the dynamic model $\widehat P$ is fixed, 
the main difficulty is to estimate the regularizer $D_{\pi}(P^{*}, \widehat P)$ for the policy $\pi$.

When the policy $\pi$ is Gaussian, direct estimation of $D_{\pi}(P^{*}, \widehat P)$ is possible.
Empirically, however, it is helpful to learn the policy $\pi$ in the class of implicit distribution, which is a richer distribution class and can better maximize the action-value function. 
Specifically, given a noise distribution $p_z(z)$, action $a = \pi_\vphi(s, z)$ with $z \sim p_z(\cdot)$, where $\pi_\vphi$ is a deterministic network.

Unfortunately, we can not directly estimate the KL term in $D_{\pi}(P^{*}, \widehat P)$ if $\pi$ is an implicit policy, since we can only draw samples from $\pi$ but the density is unknown.
A potential solution is to use the dual representation of KL divergence $\mathrm{KL}(p \,||\, q) = \sup_{T} \E_{p}[T] - \log(\E_q[e^T])$ \citep{kldual1983}, which can be estimated with samples from the distributions $p$ and $q$.
However, the exponential function therein makes the estimation unstable in practice \citep{gendice2020}.
We instead use the dual representation of the Jensen–Shannon divergence (JSD) to approximate $D_{\pi}(P^{*}, \widehat P)$, which can be approximately minimized using the GAN structure \citep{gan2014, liu2021fusedream}.
Our framework can thus utilize the many stabilization techniques developed in the GAN community (Appendix~\ref{sec:algo_details_gan}).

Besides, we remove the MIW $\omega(s,a)$ during the policy training since we do not observe its empirical benefits, which will be discussed in Section~\ref{sec:exp_ablation}. 
Further applying the replacement of KL with JSD and ignoring the $\sqrt{\cdot}$ for numerical stability, we get an \textit{approximated new regularization} for policy $\pi$:
\begin{equation} \label{eq:approx_new_reg}
    \resizebox{0.94\textwidth}{!}{%
    $
    \widetilde{D}_{\pi}(P^{*}, \widehat P) \triangleq \mathrm{JSD}\br{P^*(s'\given s,a) \pi_b(a'\given s')\dbgtrue(s,a) \,||\, \widehat P(s' \given s,a)\pi(a'\given s') \dbgtrue(s) \pi(a\given s)}\,.
$%
}
\end{equation}
Informally speaking, Eq.~\eqref{eq:approx_new_reg} regularizes the imaginary rollouts of $\pi$ on $\widehat P$ towards state-action pairs from the offline dataset.
Intuitively, $\widetilde{D}_{\pi}(P^{*}, \widehat P)$ is a more effective regularizer for policy training than the original ${D}_{\pi}(P^{*}, \widehat P)$, since $\widetilde{D}_{\pi}(P^{*}, \widehat P)$ regularizes action choices at both $s$ and $s'$.
Appendix~\ref{sec:derive_D_tilde} discusses how we move from ${D}_{\pi}(P^{*}, \widehat P)$ to $\widetilde{D}_{\pi}(P^{*}, \widehat P)$ in detail.

\subsection{Marginal importance weight training} \label{sec:main_method_miw}
A number of methods have been recently proposed to estimate the marginal importance weight $\omega$ \citep{breakingcurse2018,dualdice2019,gendice2020}. 
These methods typically require solving a complex saddle-point optimization, 
casting doubts on their training stability especially when combined with policy learning on continuous-control offline MBRL problems.
In this section, we mimic the Bellman backup to derive a fixed-point-style method for estimating the MIW.

Denote the true MIW as $\omega^{*}(s,a)\triangleq\frac{d_{\pi,\gamma}^{P^{*}}(s,a)}{\dbgtrue(s,a)}$, we have $\dbgtrue(s,a) \cdot \omega^{*}(s,a) = d_{\pi,\gamma}^{P^{*}}(s,a)$.
Expanding the RHS, $\forall\, s',a'$,
\begin{equation}\label{eq:dr_target_derive}\textstyle
    \resizebox{0.92\textwidth}{!}{%
    $
    \begin{aligned}
        \dbgtrue(s',a') \omega^{*}(s',a')
        = \gamma \sum_{s,a}\pi(a' \given s') P^*(s'  \given  s,a) \omega^{*}(s,a) \dbgtrue(s,a)   +  (1-\gamma) \mu_0(s') \pi(a'  \given  s').
    \end{aligned}
$%
}
\end{equation}
The derivation is deferred to Appendix~\ref{sec:derive_dr_target}.
Therefore, a ``\textit{Bellman equation}'' for $\omega(s',a')$ is
\begin{equation*}
\resizebox{.90\textwidth}{!}{%
    $
    \begin{aligned}
    \omega(s',a') &= \gT \omega(s', a')\,,\\
    \gT \omega(s', a') & \triangleq \frac{\gamma \sum_{s,a}  \pi(a' \given s') P^*(s'  \given  s,a) \omega(s,a) \dbgtrue(s,a) +  (1-\gamma) \mu_0(s') \pi(a'  \given  s')}{\dbgtrue(s',a')} \,.
\end{aligned}
$%
}
\end{equation*}
Here $\gT$ can be viewed as the ``\textit{Bellman operator}'' for $\omega$. 
The update iterate defined by $\gT$ has the following convergence property, which is proved in Appendix~\ref{sec:proof_miw_iterate_coverge}.
\begin{proposition}\label{thm:miw_iterate_coverge}
On finite state-action space, if the current policy $\pi$ is close to the behavior policy $\pi_b$, then the iterate for $\omega$ defined by $\gT$ converges geometrically.
\end{proposition}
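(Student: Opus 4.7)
The plan is to rewrite $\gT$ as the affine operator $\gT\omega = A\omega + b$, where the linear part is $A\omega(s',a') = \gamma \sum_{s,a} \pi(a' \given s') P^*(s' \given s,a) \omega(s,a) \dbgtrue(s,a) / \dbgtrue(s',a')$ and the offset is $b(s',a') = (1-\gamma)\mu_0(s')\pi(a' \given s')/\dbgtrue(s',a')$, and then to show $A$ is a $\gamma$-contraction in the weighted $L^1$ norm $\|f\|_{1,d} \triangleq \sum_{s,a} \dbgtrue(s,a)|f(s,a)|$ on the (finite) support of $\dbgtrue$. Since the offset cancels in any difference $\gT\omega_1 - \gT\omega_2$, geometric convergence of $\omega_{k+1} = \gT\omega_k$ then follows from the Banach fixed-point theorem applied to $\gT$ in this Banach space.

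The contraction step is short. For any $\omega_1, \omega_2$, I would expand $(A\omega_1 - A\omega_2)(s',a')$, multiply through by $\dbgtrue(s',a')$, apply the triangle inequality to the inner sum over $(s,a)$, sum over $(s',a')$, and swap the order of summation. The identity $\sum_{s',a'} \pi(a' \given s') P^*(s' \given s,a) = 1$ collapses the outer sum, yielding $\|A\omega_1 - A\omega_2\|_{1,d} \leq \gamma\|\omega_1 - \omega_2\|_{1,d}$. To identify the unique fixed point with the true MIW $\omega^*$, I would invoke Eq.~\eqref{eq:dr_target_derive}, which already shows $\omega^* = \gT\omega^*$, together with uniqueness from Banach. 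This gives $\|\omega_k - \omega^*\|_{1,d} \leq \gamma^k \|\omega_0 - \omega^*\|_{1,d}$.

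The main obstacle is not the contraction calculation, which is essentially automatic since $A$ acts like $\gamma$ times a stochastic operator, but rather pinning down the role of the closeness assumption. The rate $\gamma$ is obtained regardless of how different $\pi$ and $\pi_b$ are; what closeness buys us is the absolute-continuity condition $\mathrm{supp}(\dpigtrue) \subseteq \mathrm{supp}(\dbgtrue)$, ensuring $\omega^* = \dpigtrue/\dbgtrue$ is a bounded function in $L^1(\dbgtrue)$, i.e., the true MIW actually lives in the Banach space where $\gT$ is a contraction. An alternative path would be to run the argument in a sup norm, in which case closeness is strictly needed to bound the ratio $\pi/\pi_b$ and keep the operator norm strictly below one; the weighted $L^1$ route is cleaner because it isolates contraction from the closeness assumption and aligns with the $\dbgtrue$-weighted objective~\eqref{eq:model_weighted_mle} that $\omega$ plugs into.
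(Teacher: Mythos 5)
Your proof is correct, but it takes a genuinely different route from the paper's. The paper works in the sup norm: it bounds $\|\gT\omega-\gT u\|_\infty \leq c\,\|\omega-u\|_\infty$ with $c=\max_{s',a'}\frac{\pi(a'\given s')}{\pi_b(a'\given s')}\,c(s')$, where $c(s')<1$ is obtained by rewriting the denominator $\dbgtrue(s',a')$ through its own stationarity equation under $\pi_b$; the closeness assumption is then exactly what forces $c<1$, and the resulting rate depends on the ratio $\pi/\pi_b$ and on $\mu_0$. Your $\dbgtrue$-weighted $L^1$ argument instead exploits that the linear part $A$ is $\gamma$ times a mass-preserving (Markov) forward operator acting on the signed density $\omega\cdot\dbgtrue$, so the triangle inequality plus the identity $\sum_{s',a'}\pi(a'\given s')P^*(s'\given s,a)=1$ gives an unconditional contraction modulus $\gamma$ on the support of $\dbgtrue$ --- the ratio $\pi/\pi_b$ never appears. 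Since the space is finite-dimensional, all norms are equivalent, so you actually establish geometric convergence under weaker hypotheses and with an explicit rate $\gamma$, whereas the paper's sup-norm constant can degenerate (e.g.\ $c(s')=1$ wherever $\mu_0(s')=0$, making the required bound on $\pi/\pi_b$ delicate). What the paper's route buys is a pointwise, sup-norm contraction in which the stated assumption visibly does the work; what yours buys is a cleaner contraction with the closeness hypothesis demoted to guaranteeing $\dpigtrue\ll\dbgtrue$, so that the unique fixed point delivered by Banach is indeed the true MIW $\omega^*$ from Eq.~\eqref{eq:dr_target_derive}. Your diagnosis of where the assumption is genuinely needed is accurate and arguably sharper than the paper's own account.
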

The assumption that $\pi$ is close to $\pi_b$ coincides with the regularization term in the policy-learning objective discussed in \Secref{sec:policy_learning}.

Unfortunately, the RHS of Eq.~\eqref{eq:dr_target_derive} is not estimable since we do not know the density values therein.
We therefore multiply both sides of Eq. \eqref{eq:dr_target_derive} by some test function and subsequently sum over $(s',a')$ on both sides to get a tractable objective that only requires samples from the offline dataset and the initial state-distribution $\mu_0$.
It is desired to choose a test function that can better distinguish the difference between the left-hand side (LHS) and the RHS of Eq. \eqref{eq:dr_target_derive}. 
A potential choice is the action-value function of the policy $\pi$, due to some primal-dual relationship between the stationary state-action density-(ratio) ($d_{\pi,\gamma}^{P^{*}}(s,a)$ or $\omega^{*}(s,a)$) and the action-value function \citep{tang2019doubly,kallus2020double,bestdice2020}.
A detailed discussion on the choice of the test function is provided in Appendix~\ref{sec:miw_diss}.

Practically we use $Q_\pi^{\widehat P}$ as the test function.
Note that multiplying both sides of Eq.~\eqref{eq:dr_target_derive} by the same $Q_\pi^{\widehat P}$ does not undermine the convergence property, under mild conditions on Q. 
Mimicking the Bellman backup to sum over $(s',a')$ on both sides, with the notation $\dbgtrue(s,a,s') = \dbgtrue(s, a) P^*(s'\given s,a)$, 
\begin{equation} \label{eq:dr_target_final222}
    \resizebox{0.93\textwidth}{!}{%
    $   
    \begin{aligned}
        \overbrace{\mathbb{E}_{(s,a) \sim \dbgtrue} \left[\omega^{*}(s,a) \cdot Q_\pi^{\widehat P}(s,a) \right]}^{\ell_1(\omega^{*})}
        = \overbrace{\gamma \mathbb{E}_{\substack{(s,a,s') \sim \dbgtrue \\ a'\sim \pi(\cdot \given s')}} \left[ \omega^{*}(s,a) \cdot Q_\pi^{\widehat P}(s',a') \right] + (1-\gamma) \mathbb{E}_{\substack{s\sim \mu_0(\cdot ) \\ a \sim \pi(\cdot \given s)}}\left[ Q_\pi^{\widehat P}(s, a) \right]}^{\ell_2(\omega^{*})}\,.
        \end{aligned}
    $%
}
\end{equation}
Thus for a given $\omega$, we can optimize $\omega$ by minimizing the difference between the RHS and the LHS of Eq.~\eqref{eq:dr_target_final222}.
For training stability, we use a target network $\omega^{\prime}(s,a)$ for the RHS, and the final objective for learning $\omega$ is 
\begin{align}\textstyle
   \br{\ell_1(\omega) - \ell_2\br{\omega^{\prime}}}^2\,, \label{eq:dr_target_final}
\end{align}
where the target network $\omega'(s,a)$ is soft-updated after each gradient step, motivated by $Q_{\vtheta_j'}$ and $\pi_{\vphi'}$.

Our proposed training method is closely related to VPM \citep{vpm2020}. By using the MIW $\omega(s,a)$ itself as the test function, VPM leverages the variational power iteration to train MIW iteratively. 
Instead, our approach uses the current action-value function as the test function, motivated by the primal-dual relationship between the MIW and the action-value function in off-policy evaluation. 
We compare the empirical performance of several alternative approaches in Section~\ref{sec:exp_ablation} and in Table~\ref{table:additional_ablation} of Appendix~\ref{sec:additional_tables}.


\subsection{Practical implementation} \label{sec:main_method_imple}
In this section we briefly discuss some implementation details of our offline Alternating Model-Policy Learning (AMPL) method, whose main steps are in Algorithm~\ref{alg:simple}. 
Further details are in Appendix~\ref{sec:algo_details}.

\myparagraph{Dynamic model training.} 
We adopt common practice in offline MBRL \citep[\eg,][]{mopo2020,combo2021} to use an ensemble of Gaussian probabilistic networks $\widehat P(\cdot \given s, a)$ and $\hat r(s, a)$ to parameterize the stochastic transition and reward.
We initialize the dynamic model by standard MLE training, and periodically update the model by minimizing Eq.~\eqref{eq:model_weighted_mle}.

\myparagraph{Critic training.}
We use the conservative target in the offline RL literature \citep[\eg,][]{bcq2019,bear2019}:
\begin{equation} \textstyle \label{eq:q_tilde}
\resizebox{.93\textwidth}{!}{%
$
         \widetilde{Q}\br{s, a} \triangleq  r(s,a) + \gamma \E_{a'\sim \pi_{\vphi'}\br{\cdot \given s'}}[c \min_{j=1,2}Q_{\vtheta'_j}\br{s', a'} + (1-c) \max_{j=1,2}Q_{\vtheta'_j}\br{s', a'}]\,,
$
}%
\end{equation}
where we set $c=0.75$.
With mini-batch $\gB $ sampled from the augmented dataset $\gD$, both critic networks are trained as
\begin{equation}\label{eq:critic_target_main}\textstyle
    \forall\, j=1,2,\quad \arg\min_{\vtheta_j}\frac{1}{\abs{\gB}}\sum_{\br{s, a} \in \gB} \mathrm{Huber}( Q_{\vtheta_j}\br{s, a}, \widetilde{Q}(s, a))\,,
\end{equation}
where the Huber loss $\mathrm{Huber}(\cdot)$ is used in lieu of the classical MSE for training stability \citep{huber1964huber}.



\myparagraph{Estimating $\widetilde{D}_{\pi}(P^{*}, \widehat P)$.}
In $\widetilde{D}_{\pi}(P^{*}, \widehat P)$, using the notations of GAN,
we denote the sample from the left distribution of JSD in Eq.~\eqref{eq:approx_new_reg} by 
$\gB_{\mathrm{true}}$ (\ie,  ``true'' sample), 
and the sample from the right distribution of JSD by $\gB_{\mathrm{fake}}$ (\ie, ``fake'' sample).


The ``true'' sample $\gB_{\mathrm{true}}$ consists of samples from $\denv$.
The ``fake'' sample $\gB_{\mathrm{fake}}$ is formed by first sampling $s \sim \gD$, followed by 
$a \sim \pi_\vphi(\cdot \given s), s^{\prime} \sim \widehat P(\cdot \given s,a), a^{\prime} \sim \pi_\vphi(\cdot \given s^{\prime})$.
Concretely, the ``fake'' sample $\gB_{\mathrm{fake}}$, generator loss $\gL_g(\vphi)$, and the discriminator loss $\gL_D(\vpsi)$ can be described as
\noindent\begin{minipage}{0.3\textwidth}
\begin{equation} \label{eq:fake_sample}
\gB_{\mathrm{fake}} \triangleq     
\begin{bmatrix}
    (s~, & a~) \\
    (s', & a')
    \end{bmatrix}\,,
\end{equation}
    \end{minipage}%
    \begin{minipage}{0.00\textwidth}\centering
\mbox{}
    \end{minipage}%
    \begin{minipage}{0.7\textwidth}
\begin{equation} \label{eq:generator_loss}
\textstyle
    \resizebox{0.85\textwidth}{!}{%
    $
      \gL_g(\vphi) \triangleq \frac{1}{\abs{\gB_{\mathrm{fake}}}} \sum_{(s,a) \in \gB_{\mathrm{fake}}}\sbr{\log\br{1-D_\vpsi\br{s,a}}}\,,
    $%
}
\end{equation}
\end{minipage} 
\begin{equation}\label{eq:dis_objective}
\textstyle
        \gL_D(\vpsi) \triangleq \frac{1}{\abs{\gB_{\mathrm{true}}}} \sum_{(s,a)\sim \gB_{\mathrm{true}}}\sbr{\log D_\vpsi(s,a)} + \frac{1}{\abs{\gB_{\mathrm{fake}}}} \sum_{(s,a)\sim \gB_{\mathrm{fake}}} \sbr{\log\br{1-D_\vpsi(s,a)}},
\end{equation}
where $\gL_g(\vphi)$ is the empirical policy-learning regularizer implied by $\widetilde D_{\pi}(P^{*}, \widehat P)$.


\myparagraph{Policy training.}
Since the constant multiplier in Eq.~\eqref{equ:new_lower_bound} is unknown, we treat it as a hyperparameter.
Adding the regularization term of policy training, our policy optimization objective is 
\begin{equation}\textstyle \label{eq:policy_target}
    \arg\min_{\vphi} -\lambda \cdot\frac{1}{\abs{\gB}} \sum_{s \in \gB, a \sim \pi_\vphi(\cdot \given s)}\sbr{\min_{j=1,2} Q_{\vtheta_j}(s, a)} + \gL_g(\vphi),
\end{equation}
where the regularization coefficient $\lambda \triangleq \lambda'/Q_{avg}$, with soft-updated $Q_{avg}$ similar to \citet{td3bc2021}.
We use $\lambda'=10$ across all datasets in our experiments (\Secref{sec:experiment}).


\begin{algorithm}[t]
\captionsetup{font=small}
\caption{Main steps of AMPL.}
\begin{algorithmic}
\label{alg:simple}
\STATE \textbf{Initialize:} Dynamic model $\widehat P$ and $\hat r$, policy $\pi_{\vphi}$, critics $Q_{\vtheta_1}$ and $Q_{\vtheta_2}$, discriminator  $D_\vpsi$, MIW $\omega$.
\STATE Initialize $\widehat P$ and $\hat r$ via the MLE (Eq.~\eqref{eq:model_objective}).
\FOR{iteration $\in \{1, \ldots, {\tt total\_iterations}\}$}
\IF{iteration $\%$ {\tt model\_retrain\_period} == 0}
\STATE Estimate $\omega(s,a)$ via Eq.~\eqref{eq:dr_target_final}; train $\widehat P$ and $\hat r$ by weighted MLE (Eq.~\eqref{eq:model_weighted_mle}) with $\omega(s,a)$.
\ENDIF
\STATE Rollout synthetic data with $\pi_\vphi$, $\widehat P$ and $\hat r$, and add the data to $\dmodel$.
\STATE Sample mini-batch $\gB \sim \gD = f\denv + (1-f) \dmodel$. 
\STATE Optimize $Q_{\vtheta_1}$, $Q_{\vtheta_2}$ via Eqs.~\eqref{eq:q_tilde} -- \eqref{eq:critic_target_main}.
\STATE Train $D_\vpsi$ to maximize Eq.~\eqref{eq:dis_objective}.
\STATE Optimize $\pi_{\vphi}$ by Eq.~\eqref{eq:policy_target}.
\ENDFOR
\end{algorithmic}
\end{algorithm}

\section{Experiments} \label{sec:experiment}

In this section, we first evaluate our AMPL on continuous-control offline-RL datasets in Section~\ref{sec:exp_main}.
Then we conduct ablation study on Section~\ref{sec:exp_ablation} to understand the efficacy of some proposed designs.

\subsection{Continuous-control results}\label{sec:exp_main}
Our experiments are conducted on a diverse set of datasets in the D4RL benchmark \citep{fu2021d4rl}, ranging across the Gym-Mojoco, Maze2D, and Adroit domains therein.
We use the latest version of the datasets, \ie, ``v2'' version for the Gym-Mojoco domain and ``v1'' version for the Maze2D and Adroit domains.
Details of our dataset choice are discussed in Appendix~\ref{sec:rl_exp_details}.

We compare our AMPL with two DICE methods --- AlgaeDICE \citep{algaedice2019} and OptiDICE \citep{optidice2021} --- 
that directly utilize MIW to improve policy learning.
We further consider three state-of-the-art (SOTA) offline model-free RL algorithms: CQL \citep{cql2020}, FisherBRC \citep{fisherbrc2021}, and TD3+BC \citep{td3bc2021}; and three SOTA offline MBRL
methods: MOPO \citep{mopo2020}, COMBO \citep{combo2021}, and WMOPO \citep{wmopo2021}.
Experimental details and hyperparameter settings are discussed in Appendix~\ref{sec:rl_exp_details}.
We run baseline methods using the official implementation under the recommended hyperparameters, except for AlgaeDICE, for which we use the offline version provided by \citet{fu2021d4rl}.
Table~\ref{table:main} shows the mean and standard deviation of the results of AMPL and baselines over five random seeds.

As shown in Table~\ref{table:main}, our AMPL performs comparably well and is relatively stable across the sixteen tested datasets.
In particular, AMPL generally performs better than the baselines on the Maze2D and Adroit datasets.
The Maze2D and Adroit datasets are considered as more challenging than the MuJoCo datasets \citep{fu2021d4rl},
since the Maze2D datasets are collected by non-Markovian policies,
and the amount of data in the high-dimensional Adroit datasets is limited. 

On the Maze2D datasets, the behavioral-cloning (BC) style algorithms, such as FisherBRC, are likely to fail, 
since these methods use Markovian policy to approximate the non-Markovian behavior policy. 
Meanwhile, OptiDICE performs relatively well on the Maze2D datasets, mainly because it uses a Gaussian-mixture policy for BC to alleviate this approximation difficulty.
This advanced BC policy still offers little help on the higher-dimensional Adroit datasets, due to the challenge of learning the behavior policy.
Similarly, TD3+BC performs well on the relatively lower-dimensional Maze2D datasets,
but does not work well on the Adroit domain, 
since the action-space MSE-regularizer in TD3+BC may be insufficient for the high-dimensional tasks.
WMOPO also shows a performance drop in these two challenging task domains.
The environmental dynamics on these two domains are complex, and thus may not be accurately learned to support WMOPO's full-trajectory model rollouts, which are used to estimate its MIW.
By contrast, our AMPL does not require BC or long model-rollouts, leading to stable performance across task domains. 
Moreover, AMPL shows generally better performance than MOPO and COMBO, which use fixed pretrained dynamic models without considering the mismatched model objectives.

Compared with the DICE-based methods AlgaeDICE and OptiDICE, 
our AMPL shows generally better performance.
This may indicate that maximizing a lower bound of the true expected return can be a more effective framework than explicit stationary-distribution correction, since maximizing the lower bound is more directly related to RL's goal of maximizing the policy performance.

\begin{table}[tb] 
\captionsetup{font=small}
\caption{
\footnotesize Normalized returns for experiments on the D4RL tasks.
Mean and standard deviation across five random seeds are reported.
High average score and low average rank are desirable.
We bold the best result over all methods and underline the best of the model-based methods if different.
Here, ``hcheetah'' denotes ``halfcheetah,'' ``med'' denotes ``medium,'' ``rep'' denotes ``replay,'' and ``exp'' denotes ``expert.''} 
\label{table:main} 
\vspace{.5em}

\centering 
\def\arraystretch{1.2}
\resizebox{1.\textwidth}{!}{
\begin{tabular}{lccccccccc}
\toprule
                 Task Name &                        AlgaeDICE &                          OptiDICE &                               CQL &                        FisherBRC &                            TD3+BC &                             MOPO &                             COMBO &                        WMOPO &                              AMPL \\
\midrule
              maze2d-large &   -2.2 $\pm$ {\footnotesize 0.6} &  101.7 $\pm$ {\footnotesize 50.7} &     1.5 $\pm$ {\footnotesize 6.4} &    0.9 $\pm$ {\footnotesize 6.4} &  107.1 $\pm$ {\footnotesize 45.9} &   -0.5 $\pm$ {\footnotesize 2.6} &  138.5 $\pm$ {\footnotesize 82.2} &    1.8 $\pm$ {\footnotesize 8.6} &  \textbf{180.0} $\pm$ {\footnotesize 39.3} \\
             maze2d-med &   2.5 $\pm$ {\footnotesize 14.2} &  \textbf{119.4} $\pm$ {\footnotesize 52.3} &     6.3 $\pm$ {\footnotesize 9.1} &  16.1 $\pm$ {\footnotesize 30.3} &   61.4 $\pm$ {\footnotesize 45.5} &  12.5 $\pm$ {\footnotesize 18.3} &  103.9 $\pm$ {\footnotesize 42.1} &  12.7 $\pm$ {\footnotesize 38.3} &  \underline{107.2} $\pm$ {\footnotesize 45.0} \\
              maze2d-umaze &  -15.3 $\pm$ {\footnotesize 0.8} &  \textbf{114.0} $\pm$ {\footnotesize 39.7} &    37.5 $\pm$ {\footnotesize 7.2} &   3.6 $\pm$ {\footnotesize 16.4} &   38.6 $\pm$ {\footnotesize 14.4} &  -15.4 $\pm$ {\footnotesize 1.9} &  \underline{112.1} $\pm$ {\footnotesize 56.8} &  -11.5 $\pm$ {\footnotesize 3.1} &    55.8 $\pm$ {\footnotesize 3.9} \\
        \midrule
        hcheetah-med &   -0.9 $\pm$ {\footnotesize 0.7} &    42.0 $\pm$ {\footnotesize 3.5} &    48.6 $\pm$ {\footnotesize 0.2} &   47.8 $\pm$ {\footnotesize 0.3} &    48.0 $\pm$ {\footnotesize 0.3} &   69.2 $\pm$ {\footnotesize 3.4} &    \textbf{73.0} $\pm$ {\footnotesize 3.6} &   72.0 $\pm$ {\footnotesize 4.7} &    51.7 $\pm$ {\footnotesize 0.4} \\
           walker2d-med &    1.0 $\pm$ {\footnotesize 2.1} &   55.7 $\pm$ {\footnotesize 15.7} &    81.8 $\pm$ {\footnotesize 1.7} &   80.7 $\pm$ {\footnotesize 2.1} &    83.0 $\pm$ {\footnotesize 1.4} &   -0.1 $\pm$ {\footnotesize 0.0} &     0.5 $\pm$ {\footnotesize 0.9} &  64.8 $\pm$ {\footnotesize 30.0} &    \textbf{83.1} $\pm$ {\footnotesize 1.8} \\
             hopper-med &    0.9 $\pm$ {\footnotesize 0.2} &    57.5 $\pm$ {\footnotesize 7.5} &    68.0 $\pm$ {\footnotesize 6.0} &   94.2 $\pm$ {\footnotesize 4.5} &    59.1 $\pm$ {\footnotesize 4.5} &  44.8 $\pm$ {\footnotesize 41.5} &   22.6 $\pm$ {\footnotesize 38.1} &   \textbf{99.7} $\pm$ {\footnotesize 2.9} &    58.9 $\pm$ {\footnotesize 7.9} \\
 hcheetah-med-rep &   -3.4 $\pm$ {\footnotesize 2.3} &    40.5 $\pm$ {\footnotesize 3.3} &   38.7 $\pm$ {\footnotesize 18.6} &  34.9 $\pm$ {\footnotesize 18.2} &    44.5 $\pm$ {\footnotesize 0.6} &   62.7 $\pm$ {\footnotesize 7.5} &    66.0 $\pm$ {\footnotesize 1.8} &   \textbf{66.4} $\pm$ {\footnotesize 4.9} &    44.6 $\pm$ {\footnotesize 0.7} \\
    walker2d-med-rep &    0.5 $\pm$ {\footnotesize 0.6} &   37.3 $\pm$ {\footnotesize 21.8} &    80.8 $\pm$ {\footnotesize 4.0} &   \textbf{84.6} $\pm$ {\footnotesize 6.7} &    74.7 $\pm$ {\footnotesize 8.1} &  53.8 $\pm$ {\footnotesize 34.6} &   60.1 $\pm$ {\footnotesize 18.3} &  71.9 $\pm$ {\footnotesize 15.0} &    \underline{81.5} $\pm$ {\footnotesize 3.0} \\
      hopper-med-rep &    1.5 $\pm$ {\footnotesize 0.7} &   28.1 $\pm$ {\footnotesize 12.4} &    \textbf{96.0} $\pm$ {\footnotesize 4.7} &   94.4 $\pm$ {\footnotesize 1.9} &   58.9 $\pm$ {\footnotesize 19.7} &  84.8 $\pm$ {\footnotesize 30.0} &   53.6 $\pm$ {\footnotesize 29.5} &   \underline{93.8} $\pm$ {\footnotesize 8.5} &    91.1 $\pm$ {\footnotesize 9.5} \\
 hcheetah-med-exp &   -1.8 $\pm$ {\footnotesize 2.9} &   66.7 $\pm$ {\footnotesize 25.8} &   53.8 $\pm$ {\footnotesize 13.6} &   \textbf{94.5} $\pm$ {\footnotesize 0.6} &    91.4 $\pm$ {\footnotesize 5.2} &  64.0 $\pm$ {\footnotesize 20.8} &   52.5 $\pm$ {\footnotesize 32.0} &  68.8 $\pm$ {\footnotesize 37.5} &    \underline{90.2} $\pm$ {\footnotesize 2.0} \\
    walker2d-med-exp &   -0.2 $\pm$ {\footnotesize 0.1} &   79.4 $\pm$ {\footnotesize 16.4} &   110.3 $\pm$ {\footnotesize 0.6} &  109.3 $\pm$ {\footnotesize 0.1} &   \textbf{110.4} $\pm$ {\footnotesize 0.6} &   -0.2 $\pm$ {\footnotesize 0.0} &     1.1 $\pm$ {\footnotesize 0.6} &  98.8 $\pm$ {\footnotesize 16.2} &   \underline{107.7} $\pm$ {\footnotesize 2.1} \\
      hopper-med-exp &    2.2 $\pm$ {\footnotesize 1.9} &    52.6 $\pm$ {\footnotesize 9.3} &   79.3 $\pm$ {\footnotesize 20.5} &  \textbf{110.0} $\pm$ {\footnotesize 4.0} &    98.7 $\pm$ {\footnotesize 9.6} &   15.6 $\pm$ {\footnotesize 7.4} &   63.7 $\pm$ {\footnotesize 51.9} &  62.6 $\pm$ {\footnotesize 26.4} &   \underline{76.0} $\pm$ {\footnotesize 15.7} \\
                 \midrule
                 pen-human &   -3.0 $\pm$ {\footnotesize 0.8} &    -0.9 $\pm$ {\footnotesize 3.1} &    -1.5 $\pm$ {\footnotesize 2.8} &   -0.1 $\pm$ {\footnotesize 3.0} &    1.6 $\pm$ {\footnotesize 10.2} &   -1.6 $\pm$ {\footnotesize 2.6} &     5.9 $\pm$ {\footnotesize 8.7} &   -2.1 $\pm$ {\footnotesize 1.4} &   \textbf{20.6} $\pm$ {\footnotesize 10.7} \\
                pen-cloned &   -2.6 $\pm$ {\footnotesize 1.3} &    -0.8 $\pm$ {\footnotesize 3.1} &   39.5 $\pm$ {\footnotesize 23.6} &   -1.2 $\pm$ {\footnotesize 3.8} &     6.3 $\pm$ {\footnotesize 4.6} &   5.2 $\pm$ {\footnotesize 10.2} &   23.2 $\pm$ {\footnotesize 19.8} &   -3.0 $\pm$ {\footnotesize 2.4} &   \textbf{57.4} $\pm$ {\footnotesize 19.2} \\
                pen-exp &   -1.3 $\pm$ {\footnotesize 2.4} &     0.4 $\pm$ {\footnotesize 7.3} &  119.3 $\pm$ {\footnotesize 15.8} &    2.2 $\pm$ {\footnotesize 1.1} &  104.2 $\pm$ {\footnotesize 40.6} &  35.4 $\pm$ {\footnotesize 20.9} &   57.1 $\pm$ {\footnotesize 42.6} &  10.4 $\pm$ {\footnotesize 20.9} &   \textbf{138.3} $\pm$ {\footnotesize 6.6} \\
               door-exp &    0.1 $\pm$ {\footnotesize 0.0} &   86.1 $\pm$ {\footnotesize 23.2} &   94.0 $\pm$ {\footnotesize 15.8} &  22.9 $\pm$ {\footnotesize 28.8} &    -0.3 $\pm$ {\footnotesize 0.0} &   -0.0 $\pm$ {\footnotesize 0.1} &    -0.2 $\pm$ {\footnotesize 0.2} &   -0.1 $\pm$ {\footnotesize 0.1} &    \textbf{96.3} $\pm$ {\footnotesize 9.8} \\
               \midrule
             Average Score &                             -1.4 &                                55 &                              59.6 &                             49.7 &                              61.7 &                             26.9 &                              52.1 &                             44.2 &                              \textbf{83.8} \\
              Average Rank &                              8.5 &                               5.5 &                               4.1 &                              4.3 &                               3.9 &                              6.4 &                               4.8 &                                5.0 &                               \textbf{2.6} \\
\bottomrule
\end{tabular}
}
\end{table}

\begin{figure}[ht]
     \begin{subfigure}[b]{0.334\textwidth}
         \centering
         \includegraphics[width=\textwidth]{./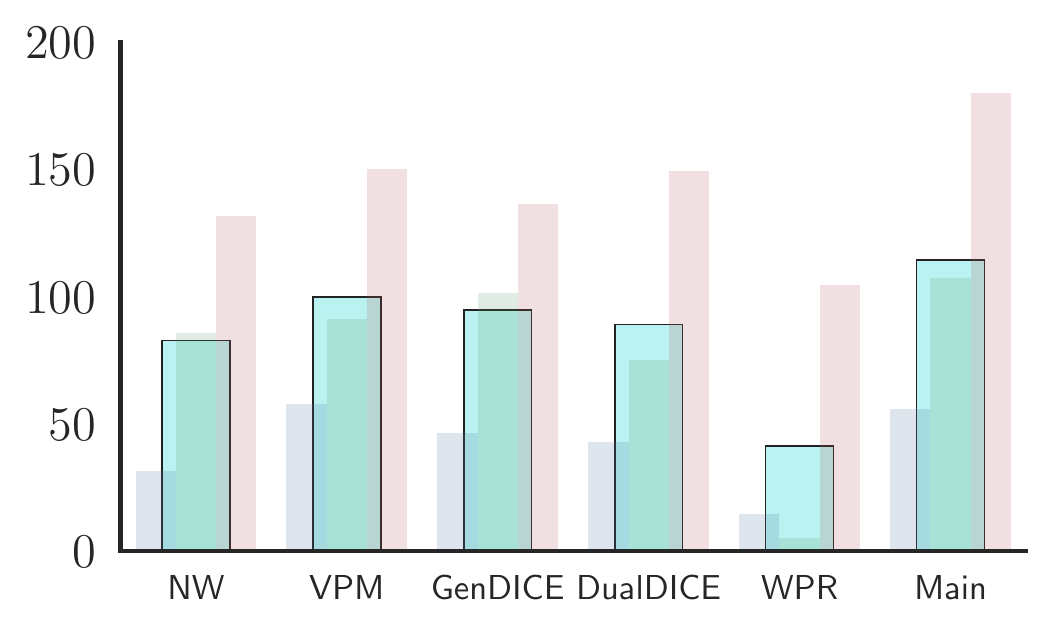}
         \captionsetup{font=small}
         \caption{\footnotesize Maze2D}
         \label{fig:abla_maze}
     \end{subfigure}
     \hspace{-.3em}%
     \begin{subfigure}[b]{0.334\textwidth}
         \centering
         \includegraphics[width=\textwidth]{./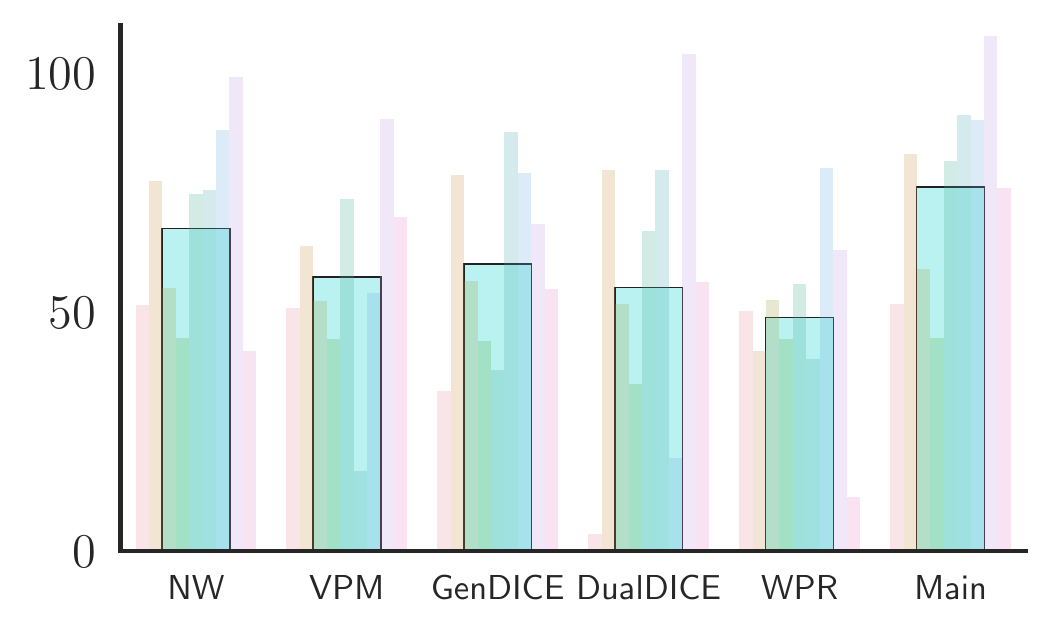}
         \captionsetup{font=small}
         \caption{\footnotesize MuJoCo}
         \label{fig:abla_mujoco}
     \end{subfigure}
     \hspace{-.3em}%
     \begin{subfigure}[b]{0.334\textwidth}
         \centering
         \includegraphics[width=\textwidth]{./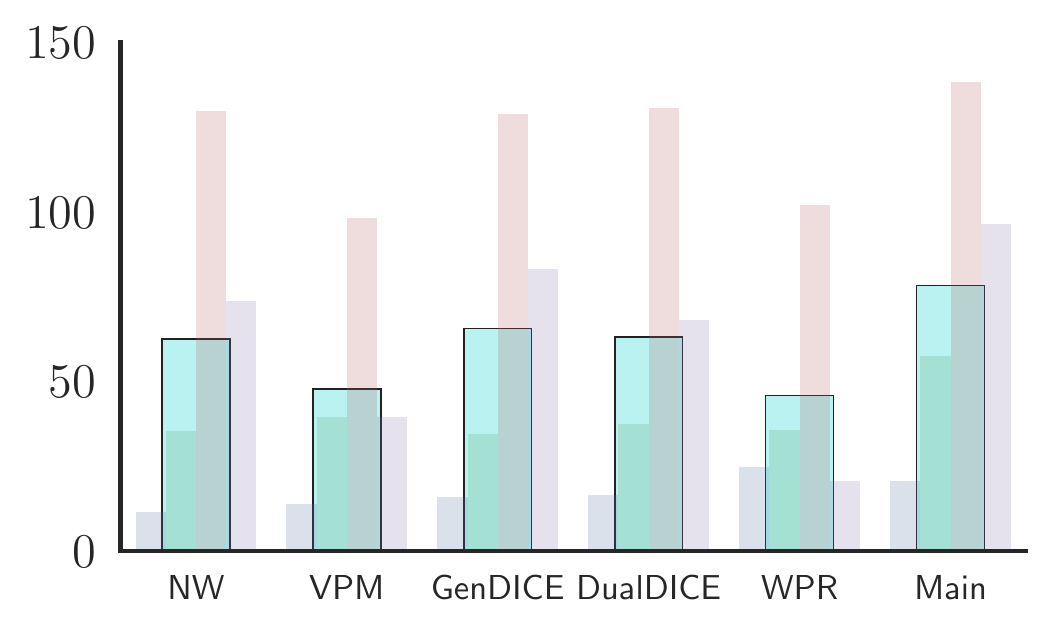}
         \captionsetup{font=small}
         \caption{\footnotesize Adroit}
         \label{fig:abla_adroit}
     \end{subfigure}
     \captionsetup{font=small}
    \caption{ 
        \small
        Scores of each method in the ablation study (\Secref{sec:exp_ablation}) on each domain of datasets.
        The faded bars show the scores on each datasets within the stated domain, averaged over five seeds, where each color corresponds to a dataset.
        The highlighted bar shows the average score on the entire domain.
        Label ``Main'' refers to the results of our main method in Section~\ref{sec:exp_main}.
        Detailed numbers are on Table~\ref{table:ablation_sec5}.
        }
        \label{fig:ablation}
\end{figure}

\subsection{Ablation study} \label{sec:exp_ablation}





\textbf{(a):} 
{\it Does the MIW-weighted model (re)training help the performance?} 

To verify the effectiveness of our MIW-weighted model (re)training scheme, we compare our AMPL with its variant of training the model only at the beginning using MLE, \ie, No Weights (dubbed as NW).
As shown in Figure~\ref{fig:ablation}, the performance of the NW variant is generally worse than the main method (dubbed as Main) on all three domains.
The performance difference is especially significant on the Maze2D domain.
In this domain, the policy is required to ``stitch together collected subtrajectories to find the shortest path to the evaluation goal'' \citep{fu2021d4rl}. As a result, the state-action distribution induced by a good policy should be different from the distribution of the offline-data.
Table~\ref{table:ablation_sec5} in Appendix \ref{sec:additional_tables} confirms the benefit of the MIW-weighted model (re)training.
Indeed, on the majority of datasets, the NW variant not only shows worse performance, but also has a larger standard deviation relative to the mean score.

\textbf{(b):} 
{\it How does the algorithm perform if we change our MIW estimation method to other approaches?}

\begin{figure}[t]
\centering
\includegraphics[width=.5\textwidth]{./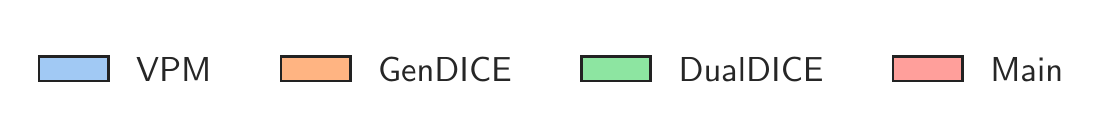}
\\
\begin{subfigure}[b]{0.49\textwidth}
         \centering
         \includegraphics[width=\textwidth]{./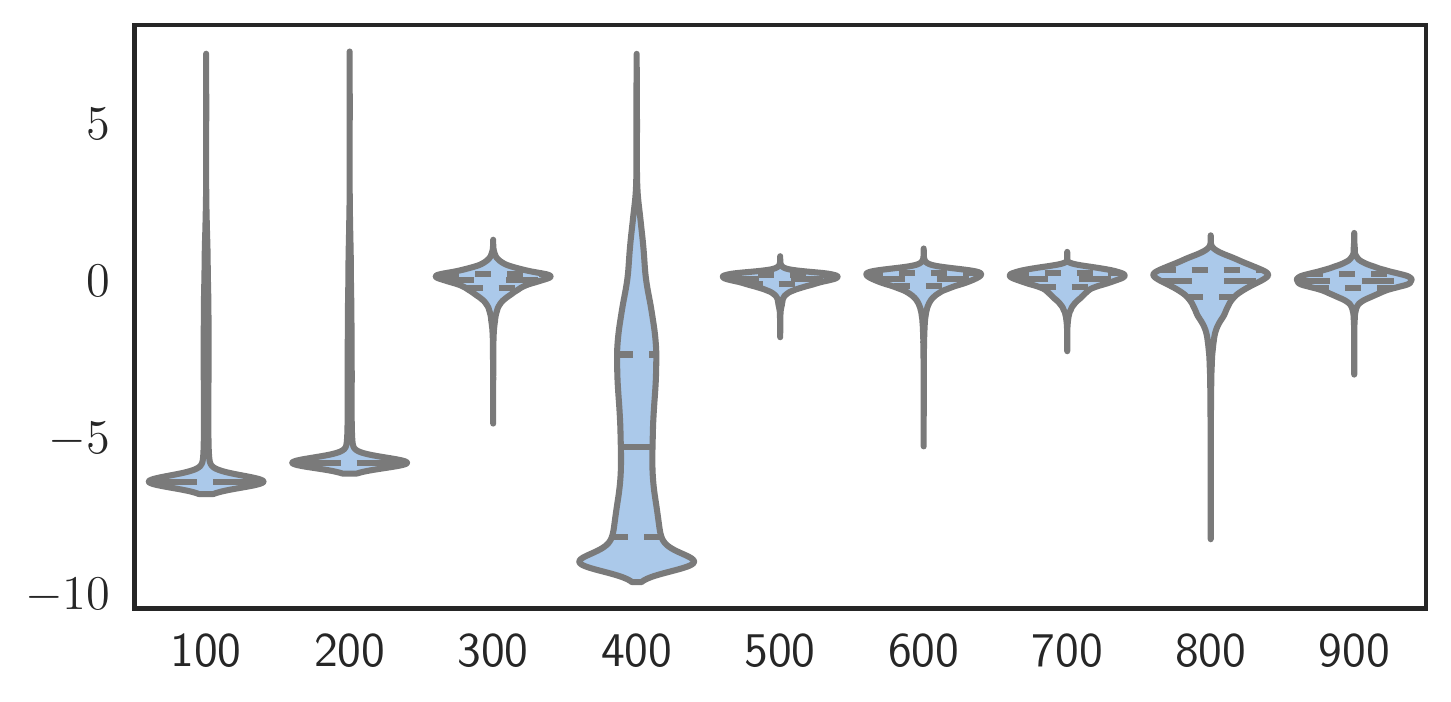}
         \captionsetup{font=small}
         \caption{\small VPM}
         \label{fig:abla_distplot_vpm}
     \end{subfigure}
     \hfill
     \begin{subfigure}[b]{0.49\textwidth}
         \centering
         \includegraphics[width=\textwidth]{./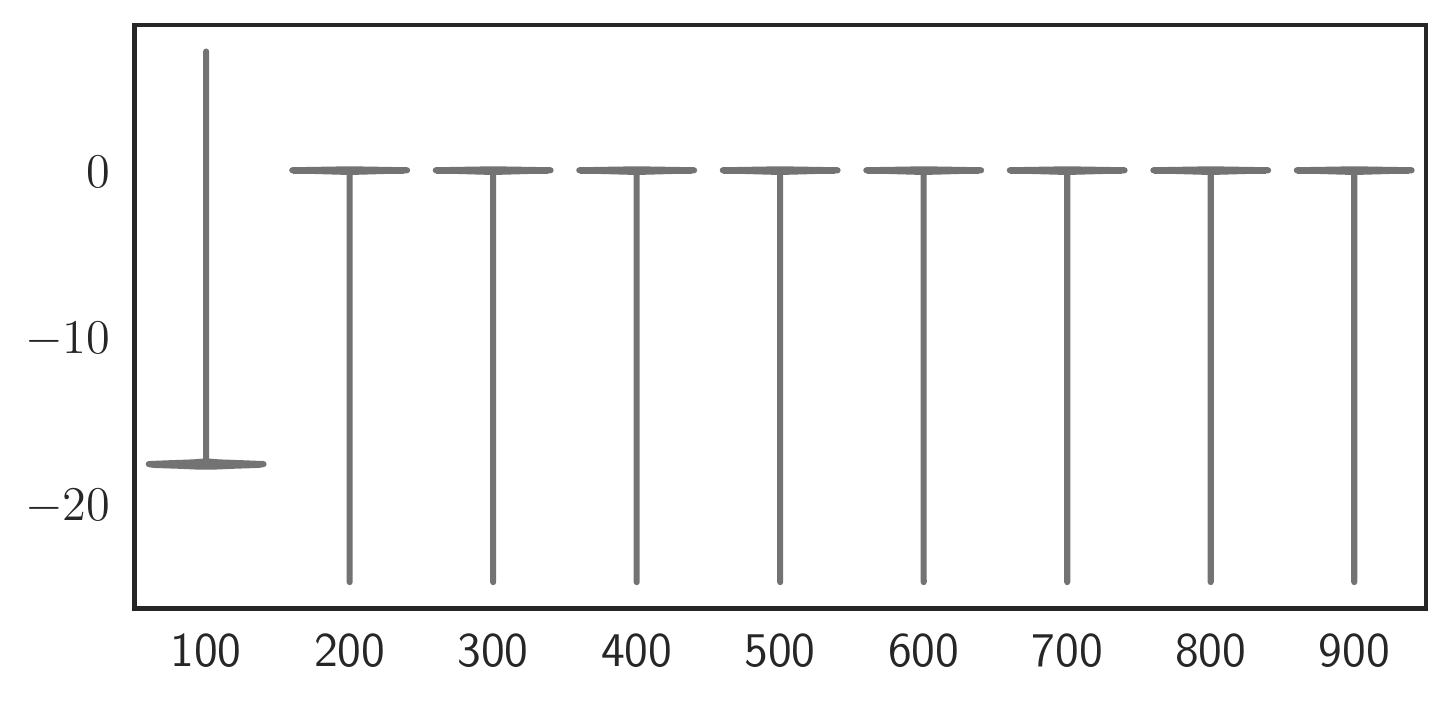}
         \captionsetup{font=small}
         \caption{\small GenDICE}
         \label{fig:abla_distplot_gendice}
     \end{subfigure}
     \\
     \begin{subfigure}[b]{0.49\textwidth}
         \centering
         \includegraphics[width=\textwidth]{./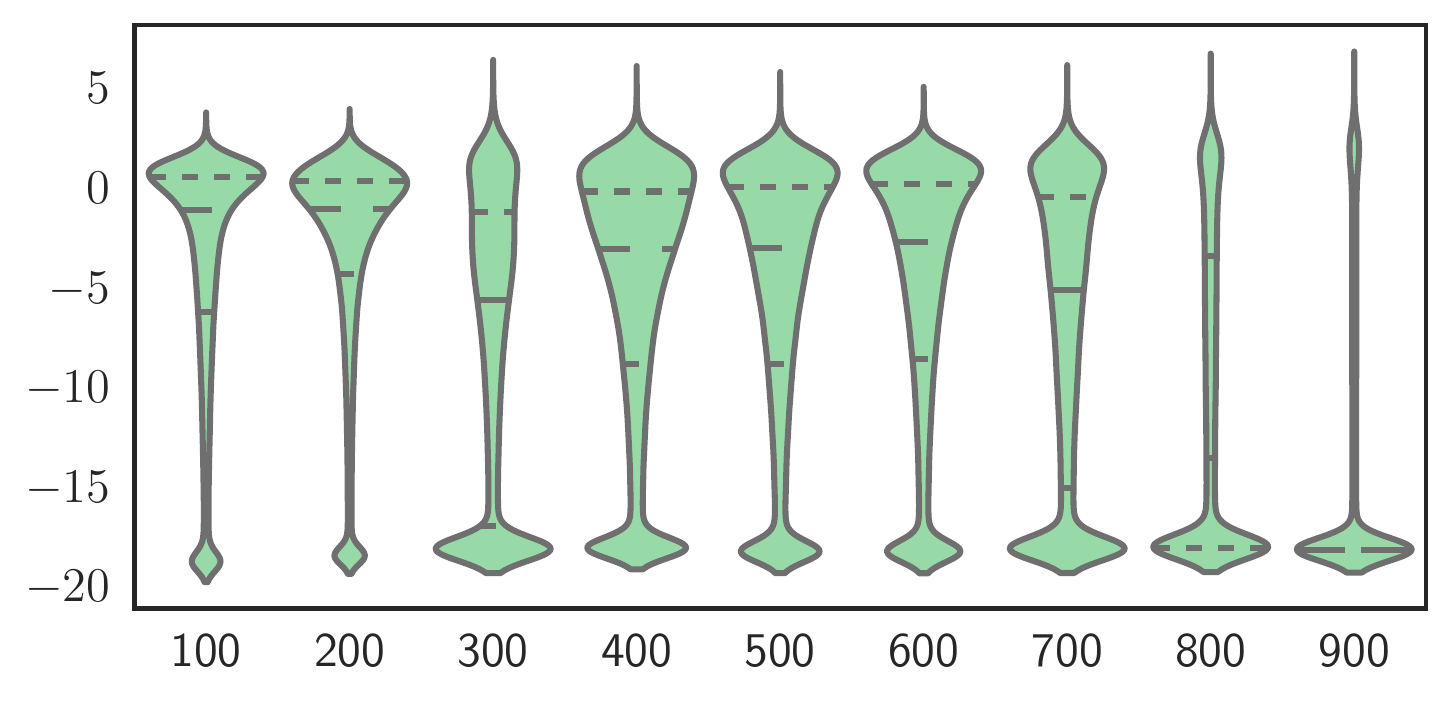}
         \captionsetup{font=small}
         \caption{\small DualDICE}
         \label{fig:abla_distplot_dualdice}
     \end{subfigure}
     \hfill
     \begin{subfigure}[b]{0.49\textwidth}
         \centering
         \includegraphics[width=\textwidth]{./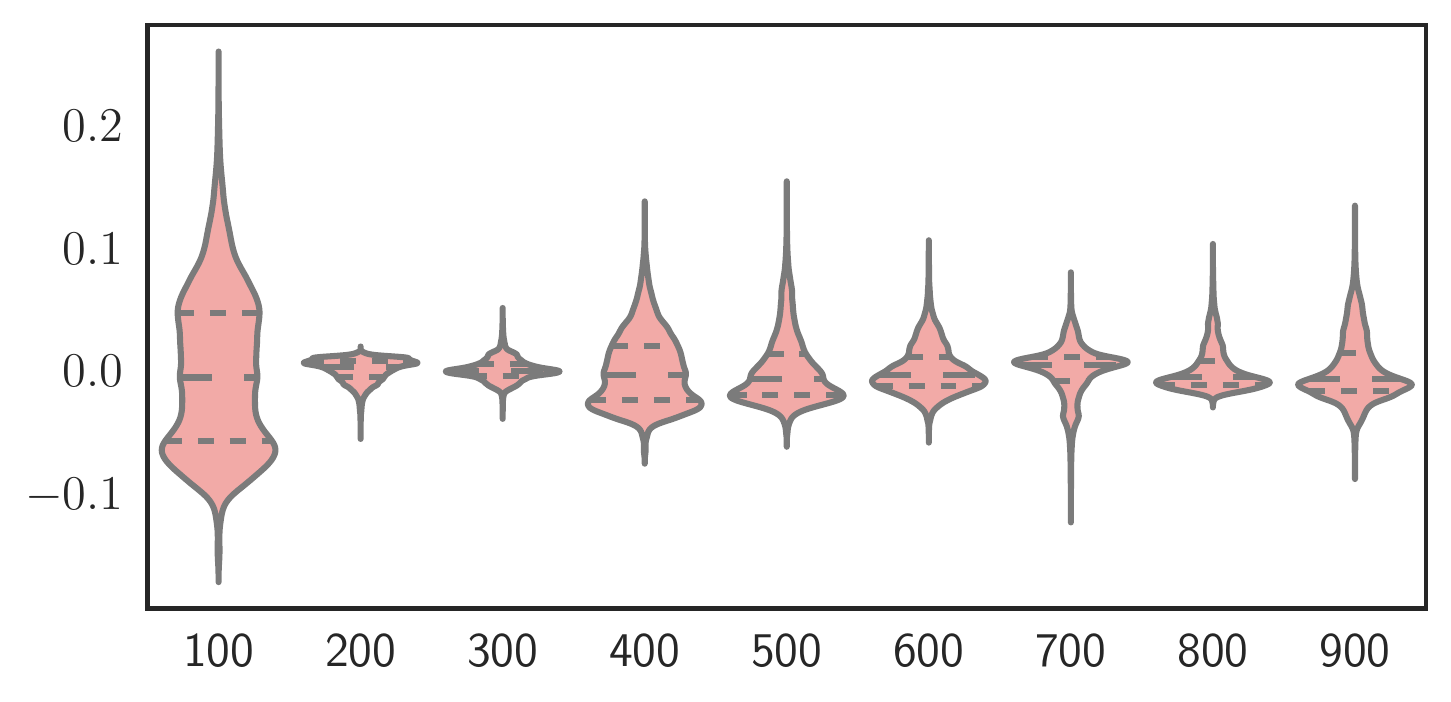}
         \captionsetup{font=small}
         \caption{\small Our method (``Main'')}
         \label{fig:abla_distplot_main}
     \end{subfigure}
    \caption{ 
        \small
        Distribution plots of $\log\br{\text{MIW}}$ of the entire dataset by our method (``Main'') and the three alternatives in \Secref{sec:exp_ablation} \textbf{(b)}, on the ``walker2d-medium-replay'' dataset during the training process.
        The dash lines represent the quartiles of the distribution.
        Recall that the MIW $\omega$ is retrained every 100 epochs.
        The $x$-axis represents the number of epochs $\cbr{100, 200, \ldots, 900}$, and the $y$-axis the $\log\br{\text{MIW}}$ values.
        For clarity, we use the normalized MIWs whose mean on the entire dataset is one.
        We use random seed two for this plot.
        Other random seeds have similar patterns.
        We note the different scales in the $y$-axis of these four plots.
        }
        \label{fig:ablation_w}
        \vspace{-1.5em}
\end{figure}

We compare our AMPL with its three variants where the MIW is instead estimated by the Variational Power Method \citep{vpm2020}, GENeralized stationary DIstribution Correction Estimation \citep{gendice2020}, and Dual stationary DIstribution Correction Estimation \citep{dualdice2019}.
Three variants are simply dubbed as VPM, GenDICE, and DualDICE.
For numerical stability, the estimated MIW from these three methods is clipped into $(10^{-8}, 500)$.
Implementation details are provided in Appendix~\ref{sec:dice_details}.

As shown in Figure~\ref{fig:ablation} and Table~\ref{table:ablation_sec5}, the variants with these three alternative MIW estimation methods generally perform worse than our approach.
A plausible explanation is that these methods can be unstable to provide good MIW estimates for the model training.
Figure~\ref{fig:ablation_w} shows the distribution plots of $\log\br{\text{MIW}}$ of the entire dataset estimated by our method and by the variants using the three alternative MIW estimation methods, on the ``walker2d-medium-replay'' dataset. 
We train the MIW every $100$ epochs and ignore the MIW initialization from the plots.
Notice that the MIWs obtained by these four methods have very different scales, which indicates the instability of the alternative methods compared with ours.

As shown on Figure~\ref{fig:abla_distplot_gendice}, the MIWs estimated by the GenDICE variant degenerate, in a sense that most of the MIWs are $\approx 0$ at the begining and are $1$ later on, with long tails on the MIW distributions.
This may explain the relatively bad performance of GenDICE.
Compared with GenDICE, DualDICE provides more diverse MIW estimates, which leads to its relatively better performance. 
Unfortunately, Figure~\ref{fig:abla_distplot_dualdice} shows that, for the DualDICE variant, the distribution of MIWs gradually concentrates on very small and very large values, which indicates the degeneration of the MIWs.
The VPM variant performs the best among these three alternatives, mainly because it provides relatively better MIW estimates in the second half of the training process.
However,
Figure~\ref{fig:abla_distplot_vpm} shows that the MIWs from the VPM variant are poorly-distributed at the beginning, though relatively better later on.
Our method provides well-behaved MIW estimates in the whole training process, which may explain our better result. 
As shown by Figure~\ref{fig:abla_distplot_main}, the MIWs from our method are well-shaped and concentrate around the mean 1 \footnote{0 in the $\log\br{\text{MIW}}$ plots.} over the entire training process.
In general, the well-shaping of the estimated MIW is important, since the Cramer-Rao lower bound of the mean-square-error for OPE is related to the square of the density ratio \citep{kallus2020double,jiang16doubly}.

Besides, on the Maze2D domain, these three variants using the alternative MIW estimation methods perform generally better than the NW variant discussed in Question \textbf{(a)},
aligning with the benefit of the MIW-weighted model (re)training scheme.
Generally speaking, incorporating the MIW can help model training in offline MBRL.

\textbf{(c):} 
{\it What is the performance of a weighted regularizer for policy learning implied by Theorem~\ref{thm:eval_error_upper_bound}?}

We compare our AMPL with its variant where the  policy regularizer 
is weighted by the MIW $\omega(s,a)$, as suggested by Theorem~\ref{thm:eval_error_upper_bound}.
This Weighted Policy Regularizer variant is dubbed as WPR. 
As shown in both Figure~\ref{fig:ablation} and Table~\ref{table:ablation_sec5}, the WPR variant underperforms our main method.
This is because when we estimate the regularization term in WPR,
we incorporate weights into the minimax optimization of the policy and the discriminator, 
which may bring additional instability. 

Table~\ref{table:additional_ablation} of Appendix~\ref{sec:additional_tables} provides additional ablation study on the performance of several alternatives.


\section{Related work} \label{sec:related_work}
\myparagraph{Offline MBRL.}
Most of the existing offline MBRL works focus on policy learning under a given model trained by MLE. 
These works typically constrain the learned policy to avoid visiting regions where the discrepancy between the true and the learned dynamic is large, thus reducing the policy evaluation error of using the learned dynamic \citep{mopo2020,morel2020,lompo2021,revisitdesignchoice2021}.
Besides, some recent works \citep{jointmatching2022,sdmgan2022} also adopt a GAN-style stochastic policy for its flexibility.
Rather than using a fixed MLE-trained model, we derive an objective that trains both the policy and the dynamic model toward maximizing a lower bound of true expected return 
(simultaneously minimizing the policy evaluation error $|J(\pi, P^{*}) - J(\pi, \widehat P)|$).
Several recent works also propose to enhance model training, such as training a reverse dynamic model to encourage conservative model-based imaginations \citep{romi2021}, 
learning a balanced state-action representation for the model to mitigate the distributional discrepancy \citep{repbmdp2018,repbsde2021}, and using advanced dynamic-model architecture of the GPT-Transformer \citep{gpt2018, fan2020bayesian, zhang2021bayesian, zhang2022allsh} to achieve accurate predictions \citep{trajectorytransformer2021}.
These methods are orthogonal to ours and may further improve the performance.

\myparagraph{Objective mismatch in MBRL.}
Our AMPL is related to prior works on online and offline MBRL that mitigate the mismatched model objectives.
In online (off-policy) MBRL, \citet{objectivemismatch2020} identify the mismatched objectives between the MLE model-training and the model's usage of improving the control performance.
This paper, however, only proposes an impractical weighted learning loss that requires the optimal trajectory. 
\citet{mnm2021} design an objective to jointly optimize the policy and the model in online MBRL.
\citet{vagram2022} propose a per-sample diagonal scaling matrix for training a deterministic model in online RL, which improves on \citet{vaml2017} and \citet{itervaml2018}.
In offline MBRL, \citet{gamps2020} propose a weighting scheme to learn the transition model.
This method requires the behavior policy and estimates the trajectory-wise importance-sampling weight, which is known to suffer from the ``curse of horizon'' \citep{breakingcurse2018}.
Thus, this method may not scale to complex high-dimensional offline RL tasks.
\citet{mml2021} propose a minimax objective for model learning, where the model is optimized over function classes of the MIW and the state-value function.
Most similar to our work, \citet{wmopo2021} also use a MIW-weighted MLE objective for model training. 
They use a different version of the MIW estimated via multiple \textit{full-trajectory} rollouts in the learned model.
This strategy may suffer from the inaccuracy of the learned model \citep{asi2012,mbmpo2018,metrpo2018} and can be time-consuming.
In this paper, we develop a fixed-point style method to train the MIW that only requires samples from the offline dataset and the initial state-distribution, which can be more stable and efficient. 

\myparagraph{Off-policy evaluation (OPE).}
OPE \citep{dualdice2019,gendice2020,jiang16doubly,coindice2020,feng2019kernel,feng2020accountable,feng2021non,tang2020off,feng2022offline} is a well-studied problem with several lines of research.
A natural way is trajectory-wise importance sampling (IS) \citep{opetrajectoryis2001}.
This, however, suffers from its variance, which can grow exponentially with the horizon length \citep{breakingcurse2018,minimaxope2015}.
Marginal importance sampling methods \citep{breakingcurse2018} are developed to address this problem by estimating the IS ratio between two stationary distributions, leading to estimators whose variance are polynomial \wrt\mbox{} horizon \citep{optope2019}.
Though prior works can also be used to estimate the MIW, 
they may not be directly applicable to our framework, since they either require knowing the behavior policy \citep{breakingcurse2018,tang2019doubly}, or 
need to assume the MIW lying in the space of RKHS \citep{blackope2020}.
Another approach to estimate the MIW is through saddle-point optimization \citep{dualdice2019,gendice2020,bestdice2020,opeduality2020}, which can suffer from its complex optimization problem.
In this paper, we derive an fixed-point-style and behavior-agnostic MIW estimation method, which is simple to train and does not require additional assumptions.

\section{Conclusion} \label{sec:conclusion}
In this paper, we are motivated by the mismatched model objectives in offline MBRL to design an iterative algorithm that alternates between model training and policy optimization.
Both the model and the policy are trained to maximize a lower bound of the true expected return.
The proposed new algorithm performs competitively with several SOTA baselines.

\myparagraph{Limitation.} 
Our current approach requires training two additional networks (discriminator $D_{\vpsi}$ and MIW $\omega$), and the model is (re)trained periodically to mitigate the objective mismatch,
which brings additional computational cost in order to obtain better empirical results.

\myparagraph{Future work.} 
We plan to conduct a theoretical analysis of our proposed framework, 
and investigate more efficient ways to unify model training and policy learning.

\section*{Acknowledgments}
S. Yang, S. Zhang, and M. Zhou acknowledge the support of NSF IIS 1812699 and 2212418, and the Texas Advanced Computing Center (TACC) for providing HPC
resources that have contributed to the research results reported within this paper.

\bibliographystyle{unsrtnat}
\bibliography{ref}


\clearpage
\appendix

\begin{center}
\Large
\textbf{Appendix}
\end{center}

\section{Additional table} \label{sec:additional_tables}

Table~\ref{table:ablation_sec5} presents the numerical results for the ablation study in \Secref{sec:exp_ablation}.

\begin{table}[ht] 
\captionsetup{font=footnotesize}
\caption{
\footnotesize Normalized returns for experiments on the D4RL tasks.
Mean and standard deviation across five random seeds are reported.
NW denotes the No-Weights variant in \Secref{sec:exp_ablation} \textbf{(a)}.
VPM, GenDICE, and DualDICE denote the variants for the MIW estimation in \Secref{sec:exp_ablation} \textbf{(b)}.
WPR denotes the Weighted Policy Regularizer variant in \Secref{sec:exp_ablation} \textbf{(c)}.
The results of our main method in \Secref{sec:exp_main} is reported in column Main.
Here, ``hcheetah'' denotes ``halfcheetah'', ``med'' denotes ``medium'', ``rep'' denotes ``replay'', and ``exp'' denotes ``expert''.} 
\label{table:ablation_sec5} 
\centering 
\def\arraystretch{1.3}
\resizebox{1.\textwidth}{!}{
\begin{tabular}{lcccccc}
\toprule
\toprule
                 Task Name &                                NW &                               VPM &                           GenDICE &                          DualDICE &                               WPR &                              Main \\
\midrule
              maze2d-umaze &   31.3 $\pm$ {\footnotesize 25.3} &   57.8 $\pm$ {\footnotesize 29.1} &   46.2 $\pm$ {\footnotesize 23.1} &   42.6 $\pm$ {\footnotesize 19.4} &   14.4 $\pm$ {\footnotesize 21.4} &    55.8 $\pm$ {\footnotesize 3.9} \\
             maze2d-med &   85.4 $\pm$ {\footnotesize 28.6} &   91.1 $\pm$ {\footnotesize 54.3} &  101.3 $\pm$ {\footnotesize 50.3} &   74.8 $\pm$ {\footnotesize 42.6} &     5.0 $\pm$ {\footnotesize 8.3} &  107.2 $\pm$ {\footnotesize 45.0} \\
              maze2d-large &  131.4 $\pm$ {\footnotesize 63.0} &  150.2 $\pm$ {\footnotesize 64.3} &  136.1 $\pm$ {\footnotesize 63.0} &  149.2 $\pm$ {\footnotesize 32.7} &  104.3 $\pm$ {\footnotesize 48.1} &  180.0 $\pm$ {\footnotesize 39.3} \\
              \midrule
            Average Maze2D &                              82.7 &                              99.7 &                              94.5 &                              88.9 &                              41.2 &                             114.3 \\
        \midrule
        hcheetah-med &    51.5 $\pm$ {\footnotesize 0.3} &    50.8 $\pm$ {\footnotesize 0.7} &   33.5 $\pm$ {\footnotesize 18.8} &     3.5 $\pm$ {\footnotesize 1.8} &    50.1 $\pm$ {\footnotesize 0.4} &    51.7 $\pm$ {\footnotesize 0.4} \\
           walker2d-med &    77.3 $\pm$ {\footnotesize 6.0} &   63.7 $\pm$ {\footnotesize 28.3} &    78.7 $\pm$ {\footnotesize 5.3} &    79.7 $\pm$ {\footnotesize 4.7} &   41.8 $\pm$ {\footnotesize 18.9} &    83.1 $\pm$ {\footnotesize 1.8} \\
             hopper-med &    55.0 $\pm$ {\footnotesize 8.9} &    52.3 $\pm$ {\footnotesize 6.6} &   56.5 $\pm$ {\footnotesize 11.3} &    51.6 $\pm$ {\footnotesize 9.1} &   52.5 $\pm$ {\footnotesize 21.6} &    58.9 $\pm$ {\footnotesize 7.9} \\
 hcheetah-med-rep &    44.6 $\pm$ {\footnotesize 0.3} &    44.3 $\pm$ {\footnotesize 0.4} &    43.8 $\pm$ {\footnotesize 1.2} &   34.9 $\pm$ {\footnotesize 18.1} &    44.4 $\pm$ {\footnotesize 0.4} &    44.6 $\pm$ {\footnotesize 0.7} \\
    walker2d-med-rep &    74.6 $\pm$ {\footnotesize 4.2} &    73.5 $\pm$ {\footnotesize 7.3} &   37.8 $\pm$ {\footnotesize 32.6} &   66.9 $\pm$ {\footnotesize 29.6} &   55.8 $\pm$ {\footnotesize 14.3} &    81.5 $\pm$ {\footnotesize 3.0} \\
      hopper-med-rep &   75.4 $\pm$ {\footnotesize 19.1} &   16.6 $\pm$ {\footnotesize 20.0} &   87.7 $\pm$ {\footnotesize 16.6} &   79.7 $\pm$ {\footnotesize 19.8} &   40.1 $\pm$ {\footnotesize 11.1} &    91.1 $\pm$ {\footnotesize 9.5} \\
 hcheetah-med-exp &    88.0 $\pm$ {\footnotesize 5.1} &   54.0 $\pm$ {\footnotesize 43.2} &   79.0 $\pm$ {\footnotesize 18.6} &   19.4 $\pm$ {\footnotesize 35.9} &    80.1 $\pm$ {\footnotesize 5.9} &    90.2 $\pm$ {\footnotesize 2.0} \\
    walker2d-med-exp &    99.1 $\pm$ {\footnotesize 7.9} &   90.3 $\pm$ {\footnotesize 30.2} &   68.3 $\pm$ {\footnotesize 32.3} &   103.9 $\pm$ {\footnotesize 4.3} &   62.9 $\pm$ {\footnotesize 17.6} &   107.7 $\pm$ {\footnotesize 2.1} \\
      hopper-med-exp &   41.7 $\pm$ {\footnotesize 16.9} &    69.8 $\pm$ {\footnotesize 9.6} &   54.7 $\pm$ {\footnotesize 22.5} &   56.2 $\pm$ {\footnotesize 26.5} &    11.3 $\pm$ {\footnotesize 7.6} &   76.0 $\pm$ {\footnotesize 15.7} \\
            \midrule
            Average MuJoCo &                              67.5 &                              57.3 &                                60.0 &                              55.1 &                              48.8 &                                76.1 \\
                 \midrule
                 pen-human &    11.4 $\pm$ {\footnotesize 8.6} &   13.8 $\pm$ {\footnotesize 12.1} &    15.7 $\pm$ {\footnotesize 9.7} &   16.4 $\pm$ {\footnotesize 11.9} &   24.8 $\pm$ {\footnotesize 14.7} &   20.6 $\pm$ {\footnotesize 10.7} \\
                pen-cloned &   35.3 $\pm$ {\footnotesize 15.7} &   39.5 $\pm$ {\footnotesize 15.8} &   34.4 $\pm$ {\footnotesize 17.2} &   37.3 $\pm$ {\footnotesize 16.4} &   35.5 $\pm$ {\footnotesize 13.8} &   57.4 $\pm$ {\footnotesize 19.2} \\
                pen-exp &  129.6 $\pm$ {\footnotesize 15.4} &   98.1 $\pm$ {\footnotesize 33.7} &  128.6 $\pm$ {\footnotesize 16.2} &  130.4 $\pm$ {\footnotesize 16.0} &  102.0 $\pm$ {\footnotesize 36.3} &   138.3 $\pm$ {\footnotesize 6.6} \\
               door-exp &   73.5 $\pm$ {\footnotesize 30.9} &   39.3 $\pm$ {\footnotesize 45.2} &   83.1 $\pm$ {\footnotesize 25.9} &   68.0 $\pm$ {\footnotesize 35.0} &   20.7 $\pm$ {\footnotesize 16.4} &    96.3 $\pm$ {\footnotesize 9.8} \\
            \midrule
            Average Adroit &                              62.4 &                              47.7 &                              65.4 &                                63.0 &                              45.8 &                              78.2 \\
               \midrule
               Average All &                              69.1 &                              62.8 &                              67.8 &                              63.4 &                              46.6 &                              83.8 \\
\bottomrule
\bottomrule
\end{tabular}
}
\end{table}

Table~\ref{table:additional_ablation} provides additional ablation study on several building blocks of our main method.
In Table~\ref{table:additional_ablation}, we test the following variants to demonstrate the effectiveness of our framework.

\begin{itemize}[leftmargin=*]
    \item \textbf{Main} denotes the results of our main method in \Secref{sec:exp_main}.
    \item \textbf{NW} denotes the No-Weights variant in \Secref{sec:exp_ablation} \textbf{(a)}.
    \item \textbf{WPR} denotes the Weighted Policy Regularizer variant in \Secref{sec:exp_ablation} \textbf{(c)}.
    \item  \textbf{KL-Dual} denotes the variant changing the JSD regularizer in Main to the $\mathrm{KL}(P^*(s'\given s,a) \pi_b(a'\given s') \,||\, \widehat P(s' \given s,a)\pi(a'\given s'))$ term in Theorem~\ref{thm:eval_error_upper_bound}, where $(s,a) \sim \dbgtrue$ and the KL term is estimated via the dual representation $\mathrm{KL}(p \,||\, q) = \sup_{T}\cbr{ \E_{p}[T] - \log(\E_q[e^T])}$ \citep{kldual1983}.
    \item \textbf{KL-Dual+WPR} denotes the variant changing the JSD regularizer in Main to the weighted $\mathrm{KL}$ term $D_{\pi}(P^*, \widehat P)$ in Theorem~\ref{thm:eval_error_upper_bound}.
    \item \textbf{Gaussian} denotes the variant changing the implicit policy in Main to the Gaussian policy.
    \item \textbf{No Reg.} denotes the variant removing the proposed regularizer in the policy learning.
    \item \textbf{No model-rollout} denotes the variant of no rollout data in policy learning. 
    \item \textbf{Rew. Test} denotes the variant of using the estimated reward function as the test function when training the MIW $\omega$.
\end{itemize}

Apart from the discussion in \Secref{sec:exp_ablation} \textbf{(a)} and \Secref{sec:exp_ablation} \textbf{(c)} on the \textbf{NW} and the \textbf{WPR} variants.
We see that changing the proposed JSD regularizer in \Secref{sec:policy_learning} to the KL-dual-based regularizers breaks the policy learning. 
This may be related to the unstable estimation of KL-dual discussed in \Secref{sec:policy_learning}.
Changing the implicit policy to the Gaussian policy generally leads to worse performance.
The performance difference is especially significant on the Maze2D and Adroit datasets.
This aligns with the observation in \citet{jointmatching2022} that a uni-model Gaussian policy may be insufficient to capture the necessary multiple action modes in the Maze2D and Adroit datasets, while an implicit policy is likely to be capable.
Removing the proposed regularization term breaks the policy learning, showing the efficacy of the proposed regularizer.
Removing rollout data in the policy learning generally leads to worse performance and larger standard deviations.
This shows the benefit of adding the model-rollout data, that they can mitigate the off-policy issue by taking into account the rollouts of the learned policy.
Finally, compared with the main method that uses the action-value function as the test function, the \textbf{Rew. Test} variant generally has worse performance and larger standard deviations.
As discussed in Appendix~\ref{sec:miw_diss}, using the (estimated) reward function as the test function loses the nice mathematical properties, and the resulting objective for training the MIW $\omega$ may not be easy to optimize.

\begin{table}[tb] 
\captionsetup{font=footnotesize}
\caption{
\footnotesize Normalized returns for experiments on the D4RL tasks.
Mean and standard deviation across five random seeds are reported.
\\
\textbf{Main} denotes the results of our main method in \Secref{sec:exp_main}.
\textbf{NW} denotes the No-Weights variant in \Secref{sec:exp_ablation} \textbf{(a)}.
\textbf{WPR} denotes the Weighted Policy Regularizer variant in \Secref{sec:exp_ablation} \textbf{(c)}.
\textbf{KL-Dual} denotes the variant changing the JSD regularizer in Main to the $\mathrm{KL}$ term in Theorem~\ref{thm:eval_error_upper_bound}.
\textbf{KL-Dual+WPR} denotes the variant changing the JSD regularizer in Main to the weighted $\mathrm{KL}$ term $D_{\pi}(P^*, \widehat P)$ in Theorem~\ref{thm:eval_error_upper_bound}.
\textbf{Gaussian} denotes the variant changing the implicit policy in Main to the Gaussian policy.
\textbf{No Reg.} denotes the variant removing the proposed regularizer in the policy learning.
\textbf{No model-rollout} denotes the variant of no rollout data in policy learning. 
\textbf{Rew. Test} denotes the variant of using the estimated reward function as the test function when training the MIW $\omega$.
Here, ``hcheetah'' denotes ``halfcheetah,'' ``med'' denotes ``medium,'' ``rep'' denotes ``replay,'' and ``exp'' denotes ``expert.''} 
\label{table:additional_ablation} 
\centering 
\def\arraystretch{1.3}
\resizebox{1.\textwidth}{!}{
\begin{tabular}{lccccccccc}
\toprule
\toprule
        Task Name &                              Main &                                NW &                               WPR &                          KL-Dual &                      KL-Dual+WPR &               Gaussian &                          No Reg. &                  No model-rollout & Rew. Test \\
\midrule
     maze2d-umaze &    55.8 $\pm$ {\footnotesize 3.9} &   31.3 $\pm$ {\footnotesize 25.3} &   14.4 $\pm$ {\footnotesize 21.4} &  -3.3 $\pm$ {\footnotesize 18.6} &  -0.9 $\pm$ {\footnotesize 16.5} &   41.5 $\pm$ {\footnotesize 26.3} &  -13.9 $\pm$ {\footnotesize 1.3} &   48.4 $\pm$ {\footnotesize 14.7} & 34.9 $\pm$ {\footnotesize  24.3} \\
    maze2d-medium &  107.2 $\pm$ {\footnotesize 45.0} &   85.4 $\pm$ {\footnotesize 28.6} &     5.0 $\pm$ {\footnotesize 8.3} &  16.9 $\pm$ {\footnotesize 17.9} &   4.8 $\pm$ {\footnotesize 23.2} &   59.5 $\pm$ {\footnotesize 29.5} &  83.3 $\pm$ {\footnotesize 30.1} &  120.7 $\pm$ {\footnotesize 41.9} & 85.6 $\pm$ {\footnotesize  45} \\
     maze2d-large &  180.0 $\pm$ {\footnotesize 39.3} &  131.4 $\pm$ {\footnotesize 63.0} &  104.3 $\pm$ {\footnotesize 48.1} &   -1.0 $\pm$ {\footnotesize 4.0} &   -0.5 $\pm$ {\footnotesize 7.4} &  106.1 $\pm$ {\footnotesize 51.0} &    0.6 $\pm$ {\footnotesize 0.7} &  121.0 $\pm$ {\footnotesize 26.5} & 143.8 $\pm$ {\footnotesize 44}  \\
     \midrule
   Average Maze2D &                             114.3 &                              82.7 &                              41.2 &                              4.2 &                              1.1 &                                69.0 &                             23.3 &                              96.7 & 88.1 \\
   \midrule
     hcheetah-med &    51.7 $\pm$ {\footnotesize 0.4} &    51.5 $\pm$ {\footnotesize 0.3} &    50.1 $\pm$ {\footnotesize 0.4} &  16.2 $\pm$ {\footnotesize 13.0} &   28.3 $\pm$ {\footnotesize 2.9} &    49.2 $\pm$ {\footnotesize 0.4} &  30.5 $\pm$ {\footnotesize 26.3} &    51.9 $\pm$ {\footnotesize 0.6} & 51.5  $\pm$ {\footnotesize 0.6} \\
     walker2d-med &    83.1 $\pm$ {\footnotesize 1.8} &    77.3 $\pm$ {\footnotesize 6.0} &   41.8 $\pm$ {\footnotesize 18.9} &    0.3 $\pm$ {\footnotesize 1.5} &    2.8 $\pm$ {\footnotesize 3.4} &    72.5 $\pm$ {\footnotesize 6.7} &    3.5 $\pm$ {\footnotesize 4.4} &    83.3 $\pm$ {\footnotesize 4.4} & 77.8 $\pm$ {\footnotesize 8.4}  \\
       hopper-med &    58.9 $\pm$ {\footnotesize 7.9} &    55.0 $\pm$ {\footnotesize 8.9} &   52.5 $\pm$ {\footnotesize 21.6} &    7.6 $\pm$ {\footnotesize 3.4} &    6.9 $\pm$ {\footnotesize 7.0} &    52.1 $\pm$ {\footnotesize 3.7} &    6.2 $\pm$ {\footnotesize 9.0} &   42.8 $\pm$ {\footnotesize 22.8} & 57.3 $\pm$ {\footnotesize 9.7}  \\
 hcheetah-med-rep &    44.6 $\pm$ {\footnotesize 0.7} &    44.6 $\pm$ {\footnotesize 0.3} &    44.4 $\pm$ {\footnotesize 0.4} &   23.1 $\pm$ {\footnotesize 5.9} &   18.9 $\pm$ {\footnotesize 8.0} &    41.1 $\pm$ {\footnotesize 0.7} &   67.3 $\pm$ {\footnotesize 2.7} &    41.9 $\pm$ {\footnotesize 1.7} & 44.6 $\pm$ {\footnotesize 0.2} \\
 walker2d-med-rep &    81.5 $\pm$ {\footnotesize 3.0} &    74.6 $\pm$ {\footnotesize 4.2} &   55.8 $\pm$ {\footnotesize 14.3} &    2.3 $\pm$ {\footnotesize 4.4} &    1.1 $\pm$ {\footnotesize 2.5} &    54.1 $\pm$ {\footnotesize 5.8} &   10.6 $\pm$ {\footnotesize 8.9} &   13.8 $\pm$ {\footnotesize 13.4} & 72.8 $\pm$ {\footnotesize 10.4}  \\
   hopper-med-rep &    91.1 $\pm$ {\footnotesize 9.5} &   75.4 $\pm$ {\footnotesize 19.1} &   40.1 $\pm$ {\footnotesize 11.1} &    6.0 $\pm$ {\footnotesize 5.7} &    5.5 $\pm$ {\footnotesize 7.3} &   71.3 $\pm$ {\footnotesize 19.7} &    9.9 $\pm$ {\footnotesize 6.5} &   85.8 $\pm$ {\footnotesize 16.8} & 79.3 $\pm$ {\footnotesize 20.6}  \\
 hcheetah-med-exp &    90.2 $\pm$ {\footnotesize 2.0} &    88.0 $\pm$ {\footnotesize 5.1} &    80.1 $\pm$ {\footnotesize 5.9} &  20.7 $\pm$ {\footnotesize 11.1} &   19.1 $\pm$ {\footnotesize 9.4} &    85.0 $\pm$ {\footnotesize 2.9} &   7.5 $\pm$ {\footnotesize 12.6} &    79.0 $\pm$ {\footnotesize 7.2} & 88.5 $\pm$ {\footnotesize 4.5}  \\
 walker2d-med-exp &   107.7 $\pm$ {\footnotesize 2.1} &    99.1 $\pm$ {\footnotesize 7.9} &   62.9 $\pm$ {\footnotesize 17.6} &    0.3 $\pm$ {\footnotesize 1.0} &    1.3 $\pm$ {\footnotesize 2.7} &    90.6 $\pm$ {\footnotesize 6.6} &    2.6 $\pm$ {\footnotesize 0.8} &   104.7 $\pm$ {\footnotesize 3.4} & 103 $\pm$ {\footnotesize 6.9}  \\
   hopper-med-exp &   76.0 $\pm$ {\footnotesize 15.7} &   41.7 $\pm$ {\footnotesize 16.9} &    11.3 $\pm$ {\footnotesize 7.6} &    3.1 $\pm$ {\footnotesize 1.4} &    4.5 $\pm$ {\footnotesize 3.3} &   59.1 $\pm$ {\footnotesize 25.5} &   12.4 $\pm$ {\footnotesize 9.6} &   48.7 $\pm$ {\footnotesize 17.4} & 59.8 $\pm$ {\footnotesize 26}  \\
   \midrule
   Average MuJoCo &                                76.1 &                              67.5 &                              48.8 &                              8.8 &                              9.8 &                              63.9 &                             16.7 &                              61.3 & 70.5 \\
   \midrule
        pen-human &   20.6 $\pm$ {\footnotesize 10.7} &    11.4 $\pm$ {\footnotesize 8.6} &   24.8 $\pm$ {\footnotesize 14.7} &    9.3 $\pm$ {\footnotesize 4.0} &    4.7 $\pm$ {\footnotesize 7.6} &     8.5 $\pm$ {\footnotesize 5.6} &    1.6 $\pm$ {\footnotesize 5.4} &     8.4 $\pm$ {\footnotesize 9.0} & 12.2 $\pm$ {\footnotesize 10.6}  \\
       pen-cloned &   57.4 $\pm$ {\footnotesize 19.2} &   35.3 $\pm$ {\footnotesize 15.7} &   35.5 $\pm$ {\footnotesize 13.8} &  22.4 $\pm$ {\footnotesize 12.5} &  35.7 $\pm$ {\footnotesize 12.8} &   39.3 $\pm$ {\footnotesize 16.4} &  39.1 $\pm$ {\footnotesize 21.3} &     3.5 $\pm$ {\footnotesize 6.2} & 37.5 $\pm$ {\footnotesize 16.6}  \\
          pen-exp &   138.3 $\pm$ {\footnotesize 6.6} &  129.6 $\pm$ {\footnotesize 15.4} &  102.0 $\pm$ {\footnotesize 36.3} &  31.2 $\pm$ {\footnotesize 15.5} &  30.1 $\pm$ {\footnotesize 18.3} &  102.0 $\pm$ {\footnotesize 14.9} &    5.2 $\pm$ {\footnotesize 7.6} &  109.2 $\pm$ {\footnotesize 40.9} & 133 $\pm$ {\footnotesize 12.7}  \\
         door-exp &    96.3 $\pm$ {\footnotesize 9.8} &   73.5 $\pm$ {\footnotesize 30.9} &   20.7 $\pm$ {\footnotesize 16.4} &    0.1 $\pm$ {\footnotesize 1.4} &    0.2 $\pm$ {\footnotesize 0.4} &   20.3 $\pm$ {\footnotesize 21.2} &   -0.3 $\pm$ {\footnotesize 0.1} &   53.9 $\pm$ {\footnotesize 41.4} & 86 $\pm$ {\footnotesize 24.3}  \\
         \midrule
   Average Adroit &                              78.2 &                              62.4 &                              45.8 &                             15.8 &                             17.7 &                              42.5 &                             11.4 &                              43.8 & 67.2 \\
   \midrule
      Average All &                              83.8 &                              69.1 &                              46.6 &                              9.7 &                             10.2 &                              59.5 &                             16.6 &                              63.6 & {73.0} \\
\bottomrule
\bottomrule
\end{tabular}
}
\end{table}

\section{Proofs} \label{sec:proofs}

\subsection{Preliminary on discounted stationary state-action distribution}

Recall that the discounted visitation frequency for a policy $\pi$ on MDP $\gM$ with transition $P^*$ is defined as 
\begin{equation*}\textstyle
    \dpigtrue \br{s, a} \triangleq (1-\gamma) \sum_{t=0}^\infty \gamma^t \Pr\br{s_t = s, a_t = a \given \mu_0, \pi, P^*}.
\end{equation*}
Denote $\mT_{\pi}^* (s' \given s) = p_{\pi}\left(s_{t+1}=s' \given s_t = s \right) = \sum_{a \in \sA}P^*\left(s_{t+1}=s' \given s_t=s, a_t=a \right) \pi\left(a \given s\right)$.

From \citet{breakingcurse2018}, Lemma 3, for $\dpigtrue(s)$ we have, 
\begin{equation*}
    \gamma \sum_s \mT_{\pi}^*(s' \given s) \dpigtrue(s) - \dpigtrue(s') + (1-\gamma) \mu_0(s') = 0, \quad \forall \, s' \in \sS,
\end{equation*}
where $\mu_0$ is the initial-state distribution. 
Multiply $\pi(a'\given s')$ on both sides, we get, $\forall \, (s', a') \in \sS \times \sA$,
\begin{equation*}
    \begin{split}
    & \gamma \sum_s \mT_{\pi}^*(s' \given s) \dpigtrue(s) \pi(a'\given s') - \dpigtrue(s',a') + (1-\gamma) \mu_0(s')\pi(a'\given s') = 0 \\
    \iff & \gamma \sum_{s,a} P^*\left(s' \given s, a \right) \dpigtrue(s,a) \pi(a'\given s') - \dpigtrue(s',a') + (1-\gamma) \mu_0(s')\pi(a'\given s') = 0
    \end{split}
\end{equation*}
We can also multiply $\widetilde \pi(a'\given s')$ on both sides to get, $\forall \, (s', a') \in \sS \times \sA$,
\begin{equation*}
    \gamma \sum_s \mT_{\pi}^*(s' \given s) \dpigtrue(s) \widetilde\pi(a'\given s') - \dpigtrue(s') \widetilde\pi(a'\given s') + (1-\gamma) \mu_0(s') \widetilde\pi(a'\given s') = 0.
\end{equation*}
Denote $\dpigtrue\br{s, a, s'} \triangleq \dpigtrue(s) \pi(a \given s)P^*(s'\given s, a )$. 
For any integrable function $f(s, a)$, we multiply both sides of the above equation by $f(s', a')$ and summing over $s', a'$, we get
\begin{equation*}
        \gamma \E_{\substack{\br{s, a, s'} \sim \dpigtrue \\ a'\sim \widetilde\pi\br{\cdot \given s'}} } \sbr{f\br{s',a'}} - \E_{\substack{s\sim \dpigtrue \\ a\sim \widetilde\pi(\cdot \given s)}}\sbr{f\br{s,a}}  + (1-\gamma) \E_{\substack{s\sim \mu_0 \\ a \sim \widetilde\pi(\cdot \given s)}} \sbr{f\br{s, a}} = 0.
    \end{equation*}
For any given bounded function $g(s, a)$, define function $f$ to satisfy
\begin{equation*}
    f(s, a) = g(s, a) + \gamma \E_{\substack{s' \sim P^*(\cdot \given s, a), a' \sim \pi(\cdot \given s')}} \sbr{f(s', a')}, \forall\, s, a,
\end{equation*}
then we have
\begin{equation*}
    \begin{split}
        \E_{s \sim \mu_0(\cdot), a \sim \pi(\cdot \given s)}\sbr{f\br{s, a}} &= \E_{s \sim \mu_0(\cdot), a \sim \pi(\cdot \given s)}\sbr{ g(s, a) + \gamma \E_{s' \sim P^*(\cdot \given s, a), a' \sim \pi(\cdot \given s')} \sbr{f(s', a')} } \\
        &= \E\sbr{\sum_{t=0}^\infty \gamma^t g\br{s_t, a_t} \given s_0 \sim \mu_0(\cdot), a_t \sim \pi(\cdot \given s_t), s_{t+1} \sim P^*\br{\cdot \given s_t, a_t} } \\
        &= \br{1 - \gamma}^{-1} \E_{\br{s, a} \sim \dpigtrue}\sbr{g\br{s, a}},
    \end{split}
\end{equation*}
based on the definition of $\dpigtrue(s, a)$ stated above.

Indeed, $f\br{s, a}$ is the action-value function of policy $\pi$ under the reward $g(s, a)$ on MDP $\gM$, and can be approximated using neural network under the classical regularity assumptions on $g(s,a)$ and $\gM$.
Also, under these classical regularity conditions, reward function $g$ and its action-value function $f$ have one-to-one correspondence, with $f$ being the unique solution to the Bellman equation.

\subsection{Proofs of Theorem~\ref{thm:eval_error_upper_bound}}\label{sec:proof_eval_error_upper_bound}
The following Lemma will be used in the proof of Theorem~\ref{thm:eval_error_upper_bound}.
\begin{lemma} \label{thm:KL_conditional}
For any $\widehat P(s' \given s,a), \pi(a'\given s'), P^*(s'\given s,a), \pi_b(a'\given s')$, we have
\begin{equation*}
    \mathrm{KL}\br{ P^*(s'\given s,a) \pi_b(a'\given s') \,||\, \widehat P(s' \given s,a)\pi(a'\given s')} \geq \mathrm{KL}\br{P^*(s'\given s,a) \,||\, \widehat P(s' \given s,a)}.
\end{equation*}
\end{lemma}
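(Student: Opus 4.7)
The plan is to apply the chain rule for KL divergence to the two joint distributions appearing on the left-hand side. Both joints factor cleanly as a marginal over $s'$ times a conditional of $a'$ given $s'$ (with $(s,a)$ held fixed throughout), so the logarithm of the density ratio splits additively:
\[
\log\frac{P^*(s'\given s,a)\pi_b(a'\given s')}{\widehat P(s'\given s,a)\pi(a'\given s')} = \log\frac{P^*(s'\given s,a)}{\widehat P(s'\given s,a)} + \log\frac{\pi_b(a'\given s')}{\pi(a'\given s')}.
\]

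Next, I would expand the KL on the LHS as a double integral against $P^*(s'\given s,a)\pi_b(a'\given s')$ and integrate each of the two resulting pieces. For the first piece involving $\log\br{P^*/\widehat P}$, the integrand does not depend on $a'$, so the inner integration of $\pi_b(a'\given s')$ over $a'$ gives $1$, and what remains is exactly $\mathrm{KL}\br{P^*(s'\given s,a)\,||\,\widehat P(s'\given s,a)}$. For the second piece involving $\log\br{\pi_b/\pi}$, integrating against $\pi_b(a'\given s')$ over $a'$ yields $\mathrm{KL}\br{\pi_b(\cdot\given s')\,||\,\pi(\cdot\given s')}$, which is then averaged over $s'\sim P^*(\cdot\given s,a)$.

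Finally, I would invoke Gibbs' inequality: the expected-conditional-KL term $\E_{s'\sim P^*(\cdot\given s,a)}\!\sbr{\mathrm{KL}\br{\pi_b(\cdot\given s')\,||\,\pi(\cdot\given s')}}$ is non-negative, so dropping it from the equality yields the desired inequality. There is no real obstacle here; the argument is simply the chain rule for KL divergence applied to the explicit product factorizations of $P^*\pi_b$ versus $\widehat P \pi$, combined with the non-negativity of KL. The same reasoning would also show that the gap between the two sides of the inequality is exactly $\E_{s'\sim P^*(\cdot\given s,a)}\!\sbr{\mathrm{KL}\br{\pi_b(\cdot\given s')\,||\,\pi(\cdot\given s')}}$, which could be useful should a sharper bound be needed elsewhere.
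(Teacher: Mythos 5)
Your proposal is correct and follows essentially the same route as the paper's proof: both split the log density ratio additively (the chain rule for KL), integrate out $a'$ to recover $\mathrm{KL}\br{P^*\,||\,\widehat P}$ plus the term $\E_{s'\sim P^*(\cdot\given s,a)}\sbr{\mathrm{KL}\br{\pi_b(\cdot\given s')\,||\,\pi(\cdot\given s')}}$, and then drop the latter by non-negativity of KL. Your closing remark that the gap is exactly this expected conditional KL is also implicit in the paper's chain of equalities.
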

\begin{proof}
Since $\mathrm{KL}(\cdot \, || \, \cdot) \geq 0$, we have
\begin{equation*}
\resizebox{0.94\textwidth}{!}{%
$
\begin{aligned}
    & \mathrm{KL}\br{P^*(s'\given s,a) \pi_b(a'\given s')  \,||\, \widehat P(s' \given s,a)\pi(a'\given s')}\\
    =&\int  P^*(s'\given s,a) \pi_b(a'\given s') \log \frac{P^*(s'\given s,a) \pi_b(a'\given s')}{\widehat P(s' \given s,a)\pi(a'\given s')} \diff (s',a') \\
    =& \int  P^*(s'\given s,a) \pi_b(a'\given s') \br{
    \log \frac{P^*(s'\given s,a)}{\widehat P(s' \given s,a)} + \log\frac{\pi_b(a'\given s')}{\pi(a'\given s')}} \diff (s',a') \\
    =& \int  P^*(s'\given s,a) \pi_b(a'\given s') \log\frac{\pi_b(a'\given s')}{\pi(a'\given s')} \diff (s',a') + \int  P^*(s'\given s,a) \log \frac{P^*(s'\given s,a)}{\widehat P(s' \given s,a)} \diff s' \\
    =& \E_{s' \sim  P^*(\cdot \given s,a)}\sbr{\mathrm{KL}\br{\pi_b(a'\given s')\,||\,\pi(a'\given s')}} + \mathrm{KL}\br{P^*(s' \given s,a) \, || \, \widehat P(s' \given s,a)} \\
    \geq& \mathrm{KL}\br{P^*(s' \given s,a) \, || \, \widehat P(s' \given s,a)},
\end{aligned}
$%
}    
\end{equation*}
as desired.
\end{proof}
\begin{remark}\label{thm:KL_conditional_s_a}
It also holds that
\begin{equation*}
    \begin{split}
        &\E_{(s,a) \sim \dbgtrue}\sbr{\mathrm{KL}\br{P^*(s'\given s,a) \pi_b(a'\given s') \,||\, \widehat P(s' \given s,a)\pi(a'\given s')}} \\
        \leq & \mathrm{KL}\br{P^*(s'\given s,a) \pi_b(a'\given s') \dbgtrue(s,a) \,||\, \widehat P(s' \given s,a)\pi(a'\given s')\dbgtrue(s) \pi(a\given s)},
    \end{split}
\end{equation*}
by using similar proof steps.
\end{remark}

\begin{proof}[Proof of Theorem~\ref{thm:eval_error_upper_bound}]
\begin{equation} \label{eq:eval_error_policy_model}
\resizebox{0.93\textwidth}{!}{%
    $
    \begin{aligned}
& \left\vert J(\pi,\widehat{P})-J(\pi,P^{*})\right\vert \\
 =& \left\vert(1-\gamma)\mathbb{E}_{\substack{s\sim\mu_{0}\\a\sim\pi(\cdot|s)}}\left[Q_{\pi}^{\widehat{P}}(s,a)\right]-\mathbb{E}_{d_{\pi,\gamma}^{P^{*}}}\left[Q_{\pi}^{\widehat{P}}(s,a)\right]+\mathbb{E}_{d_{\pi,\gamma}^{P^{*}}}\left[Q_{\pi}^{\widehat{P}}(s,a)\right]-\mathbb{E}_{(s,a)\sim d_{\pi,\gamma}^{P^{*}}}\left[r(s,a)\right]\right\vert\\
 =& \left\vert (1-\gamma)\mathbb{E}_{\substack{s\sim\mu_{0}\\a\sim\pi(\cdot|s)}}\left[Q_{\pi}^{\widehat{P}}(s,a)\right]-\mathbb{E}_{d_{\pi,\gamma}^{P^{*}}}\left[Q_{\pi}^{\widehat{P}}(s,a)\right]+\mathbb{E}_{(s,a)\sim d_{\pi,\gamma}^{P^{*}}}\left[Q_{\pi}^{\widehat{P}}(s,a)-r(s,a)\right]\right\vert \\
 =& \left\vert (1-\gamma)\mathbb{E}_{\substack{s\sim\mu_{0}\\a\sim\pi(\cdot|s)}}\left[Q_{\pi}^{\widehat{P}}(s,a)\right]-\mathbb{E}_{d_{\pi,\gamma}^{P^{*}}}\left[Q_{\pi}^{\widehat{P}}(s,a)\right]+\gamma\mathbb{E}_{(s,a)\sim d_{\pi,\gamma}^{P^{*}}}\left[\mathbb{E}_{\substack{s^{\prime}\sim\widehat{P}(\cdot|s,a)\\a^{\prime}\sim\pi(\cdot|s^{\prime})}}\left[Q_{\pi}^{\widehat{P}}(s^{\prime},a^{\prime})\right]\right]\right\vert \\
 =& \left\vert -\gamma\mathbb{E}_{(s,a)\sim d_{\pi,\gamma}^{P^{*}}}\left[\mathbb{E}_{\substack{s^{\prime}\sim P^{*}(\cdot|s,a) \\ a^{\prime}\sim\pi(\cdot|s^{\prime})}}\left[Q_{\pi}^{\widehat{P}}(s^{\prime},a^{\prime})\right]\right]+\gamma\mathbb{E}_{(s,a)\sim d_{\pi,\gamma}^{P^{*}}}\left[\mathbb{E}_{\substack{s^{\prime}\sim\widehat{P}(\cdot|s,a)\\a^{\prime}\sim\pi(\cdot|s^{\prime})}}\left[Q_{\pi}^{\widehat{P}}(s^{\prime},a^{\prime})\right]\right]\right\vert \\
 =& \gamma\left\vert \mathbb{E}_{(s,a)\sim d_{\pi,\gamma}^{P^{*}}}\left[\mathbb{E}_{\substack{s^{\prime}\sim P^{*}(\cdot|s,a)\\a^{\prime}\sim\pi(\cdot|s^{\prime})}}\left[Q_{\pi}^{\widehat{P}}(s^{\prime},a^{\prime})\right]-\mathbb{E}_{\substack{s^{\prime}\sim\widehat{P}(\cdot|s,a)\\a^{\prime}\sim\pi(\cdot|s^{\prime})}}\left[Q_{\pi}^{\widehat{P}}(s^{\prime},a^{\prime})\right]\right]\right\vert \\
 =& \gamma\left\vert \mathbb{E}_{(s,a)\sim d_{\pi,\gamma}^{P^{*}}}\left[\mathbb{E}_{s^{\prime}\sim P^{*}(\cdot|s,a)}\left[V_{\pi}^{\widehat{P}}(s^{\prime})\right]-\mathbb{E}_{s^{\prime}\sim\widehat{P}(\cdot|s,a)}\left[V_{\pi}^{\widehat{P}}(s^{\prime})\right]\right]\right\vert \\
 \leq & \gamma \mathbb{E}_{(s,a)\sim d_{\pi,\gamma}^{P^{*}}}\left\{ \left\vert\mathbb{E}_{s^{\prime}\sim P^{*}(\cdot|s,a)}\left[V_{\pi}^{\widehat{P}}(s^{\prime})\right]-\mathbb{E}_{{s'}\sim\widehat{P}(\cdot|s,a)}\left[V_{\pi}^{\widehat{P}}({s'})\right]\right\vert\right\} \\
 \leq & \frac{\gamma \cdot r_{\max}}{1-\gamma} \mathbb{E}_{(s,a)\sim d_{\pi,\gamma}^{P^{*}}}\left\{ \sup_{V \in \mathcal{V}} \left\vert\mathbb{E}_{s^{\prime}\sim P^{*}(\cdot|s,a)}\left[V(s^{\prime})\right]-\mathbb{E}_{{s'}\sim\widehat{P}(\cdot|s,a)}\left[V({s'})\right]\right\vert\right\} \\
 = & \frac{\gamma \cdot r_{\max}}{1-\gamma} \mathbb{E}_{(s,a)\sim d_{\pi,\gamma}^{P^{*}}}\left[ \mathrm{TV} \left(P^{*}(\cdot|s,a) || \widehat{P}(\cdot|s,a)\right)\right] \\
 \leq & \frac{\gamma \cdot r_{\max}}{1-\gamma} \mathbb{E}_{(s,a)\sim d_{\pi,\gamma}^{P^{*}}}\left[ \sqrt{\frac{1}{2}\mathrm{KL} \left(P^{*}(\cdot|s,a) || \widehat{P}(\cdot|s,a)\right) }\right] \\
 \leq & \frac{\gamma \cdot r_{\max}}{\sqrt{2}\cdot (1-\gamma)} \sqrt{\mathbb{E}_{(s,a)\sim d_{\pi,\gamma}^{P^{*}}}\left[\mathrm{KL} \left(P^{*}(\cdot|s,a) || \widehat{P}(\cdot|s,a)\right)\right] } \\
 \leq& \frac{\gamma \cdot \rmax}{\sqrt{2}(1-\gamma)}
\sqrt{\E_{(s,a) \sim \dpigtrue}\sbr{\mathrm{KL}\br{P^*(s'\given s,a) \pi_b(a'\given s') \,||\, \widehat P(s' \given s,a)\pi(a'\given s') }}} \\
 =& \frac{\gamma \cdot \rmax}{\sqrt{2}(1-\gamma)}
\sqrt{\E_{(s,a) \sim \dbgtrue}\sbr{\omega(s,a) \mathrm{KL}\br{P^*(s'\given s,a) \pi_b(a'\given s') \,||\, \widehat P(s' \given s,a)\pi(a'\given s') }}}
\end{aligned}
$%
}
\end{equation}
where $\gV$ is the set of functions bounded by $1$, $\omega(s,a) \triangleq \frac{d_{\pi,\gamma}^{P^{*}}(s,a)}{\dbgtrue(s,a)}$
is the marginal importance weight (MIW).
Here we use the assumption that $\forall\, s,a, \, |r(s,a)| \leq r_{\max}$, and hence $|V_{\pi}^{\widehat{P}}(\cdot)|\leq \frac{r_{\max}}{1-\gamma}$.
We use Lemma~\ref{thm:KL_conditional} to introduce $\pi$ and $\pi_b$ into $\mathrm{KL}\br{\cdot \,||\, \cdot}$.
\end{proof}

\subsection{Derivation of Eq.~\eqref{eq:approx_new_reg}}\label{sec:derive_D_tilde}
\begin{proof}[Derivarion of Eq.~\eqref{eq:approx_new_reg}]
We follow the literature \citep[\eg,][]{errapi2003,Chen2019InformationTheoreticCI} to assume that $\forall\, s,a,\, \omega(s,a) \leq \omega_{\max}$
for some unknown finite constant $\omega_{\max}$.
Using Remark~\ref{thm:KL_conditional_s_a} and this assumption, we have
\begin{equation*}
    \resizebox{\textwidth}{!}{%
    $
    \begin{aligned}
        D_{\pi} (P^{*}, \widehat P) =&~ \E_{(s,a) \sim \dbgtrue}\sbr{\omega(s,a) \mathrm{KL}\br{P^*(s'\given s,a) \pi_b(a'\given s') \,||\, \widehat P(s' \given s,a)\pi(a'\given s') }}\\
        \leq &~ \omega_{\max} \cdot \E_{(s,a) \sim \dbgtrue}\sbr{\mathrm{KL}\br{P^*(s'\given s,a) \pi_b(a'\given s') \,||\, \widehat P(s' \given s,a)\pi(a'\given s') }}\\
\leq &~ 
        \omega_{\max}\cdot \mathrm{KL}\br{P^*(s'\given s,a) \pi_b(a'\given s')\dbgtrue(s,a) \,||\, \widehat P(s' \given s,a)\pi(a'\given s') \dbgtrue(s) \pi(a\given s)}\\
\approx &~ 
        \omega_{\max}\cdot \mathrm{JSD}\br{P^*(s'\given s,a) \pi_b(a'\given s')\dbgtrue(s,a) \,||\, \widehat P(s' \given s,a)\pi(a'\given s') \dbgtrue(s) \pi(a\given s)}\\
        \triangleq &~ \omega_{\max} \cdot \widetilde{D}_{\pi}(P^{*}, \widehat P),
    \end{aligned}
$%
}
\end{equation*}
where we approximate the KL divergence by the JSD.
\end{proof}

\subsection{Derivation of Eq.~\eqref{eq:dr_target_derive}}\label{sec:derive_dr_target}
\begin{proof}[Derivarion of Eq.~\eqref{eq:dr_target_derive}]
We have, $\forall\, (s,a) \in \mathbb{S} \times \mathbb{A}$,
\begin{equation}
    \begin{split}
        & \dbgtrue(s,a) \cdot \omega(s,a) = d_{\pi,\gamma}^{P^{*}}(s,a)  \iff \\
        & \dbgtrue(s',a') \cdot \omega(s',a') = d_{\pi,\gamma}^{P^{*}}(s',a') \\
        =&  \gamma \sum_s T_\pi^*(s' \given s) d_{\pi,\gamma}^{P^{*}}(s) \pi(a' \given s') +  (1-\gamma) \mu_0(s') \pi(a'  \given  s')\\
        =& \gamma \sum_{s,a} P^*(s'  \given  s,a) d_{\pi,\gamma}^{P^{*}}(s,a) \pi(a' \given s') +  (1-\gamma) \mu_0(s') \pi(a'  \given  s')  \\
        =& \gamma \sum_{s,a} P^*(s'  \given  s,a) \omega(s,a) \dbgtrue(s,a) \pi(a' \given s')  +  (1-\gamma) \mu_0(s') \pi(a'  \given  s'),
    \end{split}
\end{equation}
where $T_\pi^*(s' \given s) = \sum_{a \in \mathbb{A}} P^*(s' \given s,a) \pi(a \given s)$ is the state-transition kernel.
Here we change $(s,a)$ into $(s',a')$ in the ``$\iff$'' for notation simplicity.
\end{proof}

\subsection{Proof of Proposition~\ref{thm:miw_iterate_coverge}}\label{sec:proof_miw_iterate_coverge}
\begin{proof}[Proof of Proposition~\ref{thm:miw_iterate_coverge}]
With the assumed conditions, we would like to show that $\gT$ is a contraction mapping under $\| \cdot \|_\infty$, {\itshape i.e.}, $\|\gT\omega - \gT u\|_\infty \leq c \cdot \| \omega - u\|_\infty$, for all MIWs $\omega, u: \mathbb{S} \times \mathbb{A} \rightarrow \mathbb{R}$, for some constant $c < 1$.
We have,
\begin{equation}\label{eq:contraction_derive}
    \resizebox{.93\textwidth}{!}{%
$
     \begin{aligned}
        &\|\gT\omega - \gT u\|_\infty \\
        =& \max_{s',a'} \abs{(\gT\omega - \gT u)(s',a')} = \max_{s',a'} \abs{\gT\omega(s',a') - \gT u(s',a')} \\
        =& \max_{s',a'} \left| \frac{\gamma\sum_{s,a} \pi(a' \given s') P^*(s' \given s,a) \dbgtrue(s,a) (\omega(s,a) - u(s,a))  }{\dbgtrue(s',a')} \right| \\
        \leq& \max_{s',a'}  \frac{\gamma\sum_{s,a} \pi(a' \given s') P^*(s' \given s,a) \dbgtrue(s,a) |\omega(s,a) - u(s,a)|  }{\dbgtrue(s',a')} \\
        \leq& \max_{s',a'}  \frac{\gamma\sum_{s,a} \pi(a' \given s') P^*(s' \given s,a) \dbgtrue(s,a)}{\dbgtrue(s',a')} \cdot \|\omega - u\|_\infty  \\
        =& \left \{ \max_{s',a'}  \frac{\gamma\sum_{s,a} \pi(a' \given s') P^*(s' \given s,a) \dbgtrue(s,a)}{\gamma\sum_{s,a} \pi_b(a' \given s') P^*(s' \given s,a) \dbgtrue(s,a) + (1-\gamma) \mu_0(s') \pi_b(a'  \given  s')} \right \} \cdot \|\omega - u\|_\infty  \\
        =& \left \{ \max_{s',a'} \frac{\pi(a' \given s')}{\pi_b(a' \given s')} \cdot  \frac{\gamma\sum_{s,a} P^*(s' \given s,a) \dbgtrue(s,a)}{\gamma\sum_{s,a}  P^*(s' \given s,a) \dbgtrue(s,a) + (1-\gamma) \mu_0(s')} \right \} \cdot \|\omega - u\|_\infty  \\
        =& \left \{ \max_{s',a'} \frac{\pi(a' \given s')}{\pi_b(a' \given s')} \cdot  \underbrace{\left(1 - \frac{(1-\gamma) \mu_0(s')}{\gamma\sum_{s,a}  P^*(s' \given s,a) \dbgtrue(s,a) + (1-\gamma) \mu_0(s')} \right)}_{\triangleq\, c(s') < 1} \right \} \cdot \|\omega - u\|_\infty  \\
        =& \left \{ \max_{s',a'}  \frac{\pi(a' \given s')}{\pi_b(a' \given s')} \cdot c(s') \right \} \cdot  \|\omega - u \|_\infty \triangleq c \cdot  \|\omega - u \|_\infty.
    \end{aligned}
$
}
\end{equation}
If the current policy $\pi$ is close to the behavior policy $\pi_b$, in a sense that $\frac{\pi(a' \given s')}{\pi_b(a' \given s')}$ is close to $1$ on the entire finite state-action space, specifically,
\begin{equation*}
    \forall \, s',a', \quad \frac{\pi(a' \given s')}{\pi_b(a' \given s')} < \frac{1}{c(s')}.
\end{equation*}
Then $c = \max_{s',a'}  \left \{ \frac{\pi(a' \given s')}{\pi_b(a' \given s')} \cdot c(s') \right \} < 1$.
Plug into Eq. \eqref{eq:contraction_derive}, we have $\| \gT\omega - \gT u\|_\infty \leq c \cdot \|\omega - u \|_\infty$ for $c < 1$.
Thus $\gT$ is a $c$-contraction mapping under $\| \cdot \|_\infty$.
It follows that there exists a unique fixed point $\omega^*$ under the mapping $\gT$, such that $\omega^* = \gT \omega^*$.
Finally, we have
\begin{equation*}
    \| \omega_{k+1} - \omega^*\|_\infty = \| \gT \omega_k - \gT \omega^* \|_\infty \leq c \cdot \| \omega_k - \omega^* \|_\infty \leq \cdots \leq c^{k+1} \cdot \| \omega_0 - \omega^* \|_\infty \rightarrow 0, \text{ as } k \rightarrow \infty,
\end{equation*}
which shows that the iterate defined by $\gT$ converges geometrically.
\end{proof}

\section{Discussion on the choice of test function for MIW}
\label{sec:miw_diss}

We first discuss the original minimax optimization loss \citep{breakingcurse2018,gendice2020,uehara2020minimax} for learning the MIW $\omega(s,a)$.
When given an offline dataset $\mathcal{D}_{\mathrm{env}}$, the objective is
\begin{equation}\label{equ:wloss}\textstyle 
\resizebox{.93\textwidth}{!}{%
$
    \min_{\omega \in W}\max_{f \in \mathcal{F}}\cbr{
    (1-\gamma)\E_{\mu_0,\pi}[f(s,a)] + 
      \E_{(s,a, s^\prime)\sim \dbgtrue, a^\prime \sim \pi(\cdot \given s^\prime)}\left[ \gamma \omega(s,a)f(s^\prime, a^\prime) - \omega(s,a)f(s,a)\right]
    }, 
      $
}
\end{equation}
where $f \in \mathcal{F}$ is a test function similar to the discriminator in adversarial training.

A number of prior works on learning $\omega(s,a)$ (\eg~DualDICE \citep{dualdice2019}, GenDICE \citep{gendice2020}) can be viewed as variants of the above minimax objective, with some additional regularization terms on $\omega(s,a)$. 
For example, there is an additional $\frac{1}{2}\omega^2$ term in DualDICE to penalize $\omega$ so that the resulting MIW values $\omega(s,a)$ will not be too large.
This term thus serves as a regularization.

However, the optimization process for $\omega(s,a)$ in Eq.~\eqref{equ:wloss}  can be unstable due to the following two reasons: 
\textbf{(1)} the minimax optimization  is itself a challenging problem, especially for neural-network-based function approximator; 
\textbf{(2)} when we fix $f(s,a)$ and optimize $\omega(s,a)$, 
the gradients for $\omega$ that come from $\gamma \omega(s,a)f(s^\prime, a^\prime)$ and $\omega(s,a)f(s,a)$ are close to each other, especially when $\gamma$ is close to 1 (\eg, $0.99$). This makes the overall gradient for $\omega$ small and unstable, especially when we use stochastic gradient descent to optimize $\omega$.
The empirical distributions of the MIW from DualDICE and GenDICE in Fig.~\ref{fig:ablation_w} illustrate the optimization difficulty of these two methods. 
Therefore, we wish to get rid of the minimax optimization and 
stabilize the training process. 

Another closely related work that also tries to address the aforementioned issues is VPM \citep{vpm2020}, where the authors use the MIW $\omega$ itself as the test function $f$, based on the observation that the optimum test function under the $f$-divergence is the MIW; and use the variational power iteration to solve for $\omega$. 
We refer to \citet{vpm2020} for the detailed derivations. 
Empirically, we can observe in Fig.~\ref{fig:abla_distplot_vpm} that the distribution of $\omega(s,a)$ obtained by VPM  tends to be more stable than those obtained by DualDICE and GenDICE. 

Our choice of the test function is motivated by the following constrained optimization formulation of off-policy evaluation \citep{tang2019doubly,uehara2020minimax}:
\begin{equation*}\textstyle
    \begin{split}
        &\min_{Q}~~ (1-\gamma)\E_{s \sim \mu_{0}, a\sim \pi(\cdot \given  s)}\left[Q(s,a)\right] \\
    &\mathrm{s.t.}~~~~Q(s,a) \geq r(s,a) + \gamma \E_{s^\prime \sim P(\cdot \given s, a), a^\prime \sim \pi(\cdot \given s^\prime)}\left[Q(s^\prime, a^\prime)\right],\quad\forall\,(s,a) \in \sS \times \sA \,.
    \end{split}
\end{equation*}
One can show that $Q_\pi^{P^*}(s,a)$ is the optimal solution of the above (primal) optimization problem.
The dual form of this optimization problem can be reformulated as 
\begin{equation*}\textstyle
    \begin{split}
        &~~~~~~~~~\max_{d: \, \sS \times \sA \rightarrow \R_+}~~ \E_{d}[r(s,a)] \\
    &\mathrm{s.t.}~~~~d(s^\prime,a^\prime) = (1-\gamma)\mu_0(s^\prime)\pi(a^\prime \given s^\prime) + \gamma \sum_{s,a}d(s,a)P(s^\prime \given  s,a)\pi(a^\prime \given  s^\prime),\quad \forall\, (s^\prime, a^\prime)\in \sS\times \sA\,.
    \end{split}
\end{equation*}
One can show that the stationary distribution $d^{P^*}_{\pi, \gamma}$ is the solution to the above dual problem.

For the dual problem, we introduce the Lagrange multiplier $Q(s,a)$ for each $(s,a)$ tuple, and the Lagrangian can be written as  
\begin{align*}
    L(d, Q):= \E_{d}[r(s,a)] + (1-\gamma)\E_{\mu_0,\pi}[Q(s,a)] + \E_{d, P, \pi}[ \gamma Q(s^\prime, a^\prime) - Q(s,a)]\,.
\end{align*}
From the previous observations, the \textit{optimal Lagrange multiplier} for this Lagrangian is $Q_\pi^{P^*}(s,a)$, which is the solution to the primal problem. 
Changing $d$ to the MIW $\omega$, we have 
\begin{equation}\label{eq:larg_w}
    \resizebox{.93\textwidth}{!}{%
$
     \begin{aligned}
        L(\omega, Q)= &\underbrace{\E_{(s,a)\sim \dbgtrue}[\omega(s,a)r(s,a)]}_{\circled{1}:~~\text{Estimator for}~J(\pi, P^*)~\text{via}~\omega(s,a)} \\
    + & \underbrace{(1-\gamma)\E_{\mu_0,\pi}[Q(s,a)] + \E_{(s,a, s^\prime)\sim \dbgtrue, a^\prime \sim \pi(\cdot \given s^\prime)}[ \gamma \omega(s,a)Q(s^\prime, a^\prime) - \omega(s,a)Q(s,a)]}_{\circled{2}:~~\text{Loss for } \omega(s,a)~\text{that ideally should be }0}\,,
    \end{aligned}
$
}
\end{equation}
where the first term $\circled{1}$ estimates the expected return $J(\pi, P^*)$ via $\omega(s,a)$, which is accurate if $\omega$ is the true density ratio.
The second term $\circled{2}$ of Eq.~\eqref{eq:larg_w} is a loss for $\omega(s,a)$ as in Eq.~\eqref{equ:wloss}, except that we replace the test function $f(s,a)$ with $Q(s,a)$. 
With the \textit{optimal Lagrange multiplier} $Q_\pi^{P^*}(s,a)$, we can get rid of the inner minimization of the original maximin problem. 
In practice, since $Q_\pi^{P^*}$ is unknown, one may directly set $Q$ to be the ``estimated optimal multiplier'' $Q_\pi^{\widehat P}$ and optimize $\omega$ solely.

From Eq.~\eqref{eq:larg_w}, the MIW $\omega$ can be optimized via two alternative approaches.
\textbf{(1)} We can directly optimize $\omega$ with $L(\omega, Q_\pi^{\widehat P})$, whose optimum should be close to $J(\pi, P^*)$.
\textbf{(2)} We can optimize $\omega$ \wrt~$\circled{2}$, because we know that if $\circled{2}$ is close to zero, $\omega$ is close to the true MIW, and the resulting $L(\omega, Q_\pi^{\widehat P})$ is again close to $J(\pi, P^*)$ based on the term $\circled{1}$.
In theory the main difference of these two approaches is that we may obtain a more accurate estimate for $J(\pi, P^*)$ using approach \textbf{(1)} since we optimize $\omega$ on both the first and the second terms.
However, since the true reward function $r(s,a)$ is unknown, when we optimize $\omega$ using approach \textbf{(1)}, the optimization process may be unstable due to the approximation error for $r(s,a)$. 
By contrast, for approach \textbf{(2)}, we know that $\omega$ is accurate when $\circled{2}$ is close to zero, thus we can leverage the MSE loss to optimize $\omega$ so that $\circled{2}$ is shrunk towards zero. 
This approach is similar to the supervised learning and can be more stable in practice. 
Since our goal is to obtain a good estimate of the MIW $\omega$, not the numerical value of $J(\pi, P^*)$, we therefore choose approach \textbf{(2)} that optimize $\omega$ \wrt~$\circled{2}$.

Another intuitive motivation for using $Q_\pi^{P^*}$ as the test function is that the minimax loss Eq.~\eqref{equ:wloss} for optimizing $\omega$ is based on a \textit{saddle-point optimization}, and $f\in \mathcal{F}$ is the \textit{dual variable} for the MIW $\omega$.
From the previous discussion, MIW and the action-value function have some primal-dual relationship, and $Q_\pi^{P^*}$ is the optimal ``dual'' variable for the off-policy evaluation problem \wrt~the MIW $\omega$.
Since the dual variable $f$ in Eq.~\eqref{equ:wloss} is indeed some action-value function \citep{dualdice2019}, we may set $f$ to be this optimal ``dual'' of $\omega$, which will lead to the choice of $Q_\pi^{P^*}$ as the test function.

\myparagraph{Remark.} 
Note that both our approach and VPM choose the test function based on some mathematical relationships between the test function and the MIW $\omega$, 
so that the objective for $\omega$ could potentially be easier to optimize.
Both methods are not the \textit{best theoretical way} to optimize $\omega$. 
The best theoretical way is simply using the saddle-point-optimization based methods such as DualDICE, GenDICE, or a direct minimax optimization on $L(\omega, Q)$ in Eq.~\eqref{eq:larg_w}. 
However, as we discussed above, there are some practical difficulties for the saddle-point-optimization based approaches. 
As we observed in Fig.~\ref{fig:ablation_w}, the empirical results indeed show that our approach and VPM are more stable than DualDICE and GenDICE. 
Further, it is feasible to use other functions as the test function, such as the reward function or any other initialized neural networks. 
However, these choices may not have nice mathematical properties, such as being the fixed-point solution or some primal-dual relationship, and thus the resulting objective may not be easier to optimize. 
Table~\ref{table:additional_ablation} in Appendix~\ref{sec:additional_tables} contains an ablation study where we use the reward function as the test function to optimize $\omega$.
Indeed, this variant generally performs worse than our main method that uses the Q-function as the test function.

\section{Using value function as the discriminator of model training}\label{sec:q_dis}

Eq.~\eqref{eq:eval_error_policy_model} of the proof of Theorem~\ref{thm:eval_error_upper_bound} motivates us to use the value function $V_{\pi}^{\widehat P}(\cdot)$ as the discriminator of the model training.
Specifically, the model training objective is
\begin{equation}\label{eq:q_dis_obj}
    \arg\min_{\widehat P \in \gP} \mathbb{E}_{(s,a)\sim \dbgtrue}\left\{ \omega(s,a) \cdot \left\vert\mathbb{E}_{s^{\prime}\sim P^{*}(\cdot \given s,a)}\left[V_{\pi}^{\widehat{P}}(s^{\prime})\right]-\mathbb{E}_{{s'}\sim\widehat{P}(\cdot \given s,a)}\left[V_{\pi}^{\widehat{P}}({s'})\right]\right\vert\right\},
\end{equation}
where the discriminator $V_{\pi}^{\widehat P}$ can be implemented by the current estimate of the action-value function, which is treated as fixed during the model training.
We implement this idea as a variant where the model is trained by the weighed objective Eq.~\eqref{eq:q_dis_obj} that uses the value-function estimate to discriminate the predicted transition from the real.
The reward function $\hat r$ is still estimated by the weighted-MLE objective.
The model is initialized by the MLE loss.
Other technical details follow the main algorithm.
We dubbed this variant as ``V-Dis''.

Table~\ref{table:ablation_q_dis} compares this variant with our main method.
We see that V-Dis underperforms our main method on almost all dataset, often by a large margin. The inferior results of V-Dis may be related to the coupled effect on model training from the inaccurate value-function estimation.
In Eq.~\eqref{eq:q_dis_obj}, this inaccuracy can affect both the value difference (the $\abs{\cdot}$ term) and the MIW $\omega$.
Our fix of $V_\pi^{\widehat P}$ during the model training process may also affect the performance. 
Future work can investigate how to train $V_\pi^{\widehat P}$ together with $\widehat P$ in the model estimation.



\begin{table}[t] 
\caption{
\small Normalized returns for experiments on the D4RL tasks.
Mean and standard deviation across five random seeds are reported.
``V-Dis'' denotes the variant where the model is trained by the value-function-discriminated weighted objective Eq.~\eqref{eq:q_dis_obj}.
The results of our main method in \Secref{sec:exp_main} is reported in column Main.
Here, ``hcheetah'' denotes ``halfcheetah,'' ``med'' denotes ``medium,'' ``rep'' denotes ``replay,'' and ``exp'' denotes ``expert.''} 
\label{table:ablation_q_dis} 

\centering 
\renewcommand{\arraystretch}{1.1}
\setlength{\tabcolsep}{35pt}
\begin{tabular}{lcc}
\toprule
                 Task Name &                              V-Dis &                              Main \\
\midrule
              maze2d-umaze &   36.7 $\pm$ {\footnotesize 28.3} &    55.8 $\pm$ {\footnotesize 3.9} \\
             maze2d-med &   36.9 $\pm$ {\footnotesize 19.0} &  107.2 $\pm$ {\footnotesize 45.0} \\
              maze2d-large &   23.9 $\pm$ {\footnotesize 17.6} &  180.0 $\pm$ {\footnotesize 39.3} \\
        hcheetah-med &   38.3 $\pm$ {\footnotesize 17.1} &    51.7 $\pm$ {\footnotesize 0.4} \\
           walker2d-med &   46.9 $\pm$ {\footnotesize 28.3} &    83.1 $\pm$ {\footnotesize 1.8} \\
             hopper-med &    58.4 $\pm$ {\footnotesize 7.4} &    58.9 $\pm$ {\footnotesize 7.9} \\
 hcheetah-med-rep &   21.3 $\pm$ {\footnotesize 13.9} &    44.6 $\pm$ {\footnotesize 0.7} \\
    walker2d-med-rep &   16.3 $\pm$ {\footnotesize 12.9} &    81.5 $\pm$ {\footnotesize 3.0} \\
      hopper-med-rep &   69.0 $\pm$ {\footnotesize 19.6} &    91.1 $\pm$ {\footnotesize 9.5} \\
 hcheetah-med-exp &    39.7 $\pm$ {\footnotesize 4.2} &    90.2 $\pm$ {\footnotesize 2.0} \\
    walker2d-med-exp &   51.3 $\pm$ {\footnotesize 29.4} &   107.7 $\pm$ {\footnotesize 2.1} \\
      hopper-med-exp &    24.0 $\pm$ {\footnotesize 8.0} &   76.0 $\pm$ {\footnotesize 15.7} \\
                 pen-human &   11.4 $\pm$ {\footnotesize 10.0} &   20.6 $\pm$ {\footnotesize 10.7} \\
                pen-cloned &   36.3 $\pm$ {\footnotesize 10.5} &   57.4 $\pm$ {\footnotesize 19.2} \\
                pen-exp &  131.3 $\pm$ {\footnotesize 16.7} &   138.3 $\pm$ {\footnotesize 6.6} \\
               door-exp &   60.4 $\pm$ {\footnotesize 33.0} &    96.3 $\pm$ {\footnotesize 9.8} \\
               \midrule
             Average Score &                              43.9 &                              83.8 \\
\bottomrule
\end{tabular}
\end{table}

\section{Technical details} \label{sec:exp_details}


\subsection{Details for the main algorithm} \label{sec:algo_details}
Our main algorithm consists of three major components: model training (Appendix~\ref{sec:algo_details_model}), the marginal importance weight training (Appendix~\ref{sec:algo_details_td_dice}), and RL policy training via actor-critic algorithm (Appendix~\ref{sec:algo_details_ac}).

The non-zero assumption of $Q_\pi^{\widehat P}$ in the derivation of Eq. \eqref{eq:dr_target_final} motivates us to scale the rewards in the offline dataset to be roughly within $(0,1]$, similar to \citet{lapo2022}.
Specifically, before starting our main algorithm, we normalize the rewards $r_i$'s in the offline dataset via $(r_i - r_{\min} + 0.001)/(r_{\max} - r_{\min})$, where $r_{\min}$ and $r_{\max}$ respectively denote the minimum and maximum reward in the offline dataset.

\subsubsection{Model training} \label{sec:algo_details_model}

We follow the literature \citep{pets2018, mbpo2019, mopo2020,wmopo2021} to assume no prior knowledge about the reward function and thus use neural network to approximate transition dynamic and the reward function.
Our model is constructed as an ensemble of Gaussian probabilistic networks .
Except for the weighted loss function, we use the same model architecture, ensemble size (=7), number of elite model (=5), train-test set split, optimization scheme, elite-model selection criterion, and sampling method as in \citet{mopo2020}.
To save computation, we define each epoch of model training as $1000$ mini-batch gradient steps, instead of the original {\tt n\_train / batch\_size}.

As in \citet{mopo2020}, the input to our dynamic model is $\br{(s,a) - \mu_{(s,a)}} / \sigma_{(s,a)}$, where $\mu_{(s,a)}$ and $\sigma_{(s,a)}$ are respectively coordinate-wise mean and standard deviation of $(s,a)$ in the offline dataset.
We follow \citet{morel2020} and \citet{bremen2021} to define the learning target as $\br{\br{r, \Delta s} - \mu_{\br{r, \Delta s}}} / \sigma_{\br{r, \Delta s}}$, where $\Delta s$ denote $s' - s$ ,  $\mu_{\br{r, \Delta s}}$ and $\sigma_{\br{r, \Delta s}}$ denote the coordinate-wise mean and standard deviation of $\br{r, \Delta s}$ in the offline dataset.
As in prior work using Gaussian probabilistic ensemble on model-based RL \citep{pets2018,  mbpo2019, mopo2020, mabe2021, combo2021}, we use a double-head architecture for our dynamic model, where the two output heads represent the mean and log-standard-deviation of the normal distribution of the predicted output, respectively.
We augment the maximum likelihood loss in \citet{mopo2020} as the weighted maximum likelihood loss, independently for each model in the ensemble.
Specifically, the dynamic model is initially trained using un-weighted maximum likelihood loss. 
We then employ the warm-start strategy to start each subsequent weighted re-training from the current dynamic model, with the weights $\omega(s,a)$ from the latest marginal importance weight network.
To unify the learning rate across each run of model (re-)training, the weights $\omega(s,a)$'s are normalized so that their mean is $1$ across the offline dataset.

To improve training stability, we are motivate by \citet{wmopo2021} to heuristically augment the environmental termination function by {(1)} $\mathrm{any}(|\hat{s}'| > 2 \cdot \max_{s, s' \in \denv}\cbr{|s|, |s'|})$ where all operations are coordinate-wise, \ie, whether any coordinate of the predicted next state is outside of this relaxed observation-range in the offline dataset; and {(2)} $|\hat{r}(s,a)| > r_{\mathrm{range}} \triangleq \max\br{|r_{\min} - 10 \cdot \sigma_r|, |r_{\max} + 10 \cdot \sigma_r|)}$ where  $r_{\min}, r_{\max}, \sigma_r$ are the minimum, maximum, and standard deviation of rewards in the offline dataset, \ie, whether the predicted reward is outside of this relaxed reward-range in the offline dataset.
To penalize the policy for visiting out-of-distribution state-action pair, we modify $\hat{r}(s,a)$ to be $- r_{\mathrm{range}}$ if either (1) or (2) is violated.
No modification for $\hat{r}(s,a)$ is performed if only the environmental termination is triggered.

\subsubsection{Marginal importance weight training} \label{sec:algo_details_td_dice}
We follow the OPE literature \citep{breakingcurse2018,algaedice2019,gendice2020} to assume that the initial-state distribution $\mu_0$ is known.
In the implementation, we augment $\denv$ with $10^5$ samples from the ground-truth $\mu_0$.
To enhance training stability, we are motivated by the double Q-learning \citep{doubleq2010} to use a target marginal importance weight network $\omega'(s,a)$ and a target critic network $Q'(s,a)$ in the implementation of MIW training.
We employ the warm-start strategy so that each retraining of the marginal importance weight starts from the current $\omega(s,a)$. 
On each run of the weight-retraining, $Q'$ is initialized as the current critic target $Q_{\vtheta'}$.
Since the weights $\omega(s,a)$'s are normalized to have mean $1$ in the weighted re-training of the dynamic model, to narrow the gap between the learning and the application of the weights, we add a constraint into the training of $\omega$ that its mean  on each mini-batch is upper bounded by some constant $g_{\mathrm{constraint}}$.
Specifically, on each mini-batch gradient step, MIW is trained by the following constraint optimization
\begin{equation}\label{eq:td_dice_minibatch}
    \resizebox{0.93\textwidth}{!}{%
    $
    \begin{aligned}
         \arg\min_{\omega}& \; \br{\frac{1}{|{\gB}|} \sum_{\br{s_i, a_i} \in {\gB}} \omega\br{s_i, a_i} \cdot Q_\vtheta\br{s_i, a_i} - y}^2, \\ 
         \mathrm{s.t.}~~~& \; \frac{1}{|{\gB}|} \sum_{\br{s_i, a_i} \in {\gB}} \omega\br{s_i, a_i} \leq g_{\mathrm{constraint}}, \\
         \text{where } & \; y = \gamma \cdot \frac{1}{|\gB|}\sum_{~{\br{s_i, a_i, s'_i} \in \gB}\atop {a'_i\sim \pi_\vphi\br{\cdot \given s'_i}}} \omega'\br{s_i, a_i} \cdot Q'\br{s'_i, a'_i} + (1-\gamma) \cdot \frac{1}{|\gB_{\mathrm{init}}|}\sum_{~{s_0 \in \gB_{\mathrm{init}}}\atop {a_0\sim \pi_\vphi\br{\cdot \given s_0}}} Q'\br{s_0, a_0},  
   \end{aligned}
$%
}
\end{equation}
where $g_{\mathrm{constraint}}$ is conveniently chosen as $10$ and $\gB, \gB_{\mathrm{init}} \sim \denv$. 
To optimize Eq.~\eqref{eq:td_dice_minibatch}, we use the constraint optimization method in \citet{barrierlexico2021} with base optimizer Adam \citep{adam2014}.
For training stability, the gradient norm is clipped to be bounded by $1$.

As in double Q-learning, we update $Q'$ towards $Q_\vtheta$ and $\omega'$ towards $\omega$ via exponential moving average with rate $0.01$ after each gradient step.
We use batch sizes $|\gB| = 1024$ and $|\gB_{\mathrm{init}}| = 2048$ to train the marginal importance weight model $\omega(s,a)$.

\subsubsection{Actor-critic training} \label{sec:algo_details_ac}
Our policy training consists of the following six parts.

\paragraph{Generate synthetic data using dynamic model.} 
We follow \citet{mbpo2019} and \citet{mopo2020} to perform $h$-step rollouts branching from the offline dataset $\denv$, using the learned dynamic $\widehat P, \hat{r}$, and the current policy $\pi_\vphi$.
The generated data is added to a separate replay buffer $\dmodel$.

To save computation, we generate synthetic data every {\tt rollout\_generation\_freq} iterations.

\paragraph{Critic training.} 
Sample mini-batch $\gB \sim \gD = f \cdot \denv + (1 - f)\cdot \dmodel$, with $f \in [0,1]$.
For each $s' \in \gB$, sample one corresponding actions $a' \sim \pi_{\vphi'}(\cdot \given s')$.
Calculate the estimate of the action-value of the next state-action pair as
\begin{equation*}
         \widetilde{Q}\br{s, a} \triangleq c \min_{j=1,2}Q_{\vtheta'_j}\br{s', a'} + (1-c) \max_{j=1,2}Q_{\vtheta'_j}\br{s', a'}.
\end{equation*}
Calculate the critic-learning target defined as
\begin{equation} \label{eq:critic_target}
    \widetilde{Q}\br{s, a} \leftarrow r(s,a) + \gamma \cdot \widetilde{Q}\br{s, a} \cdot \br{\abs{\widetilde{Q}\br{s, a}} < 2000},
\end{equation}
where $2000$ is conveniently chosen to enhance the termination condition stored in $\gB$ for training stability.
Then minimize the critic loss with respect to the double critic networks, over $\br{s, a} \in \gB$, with learning rate $\eta_\vtheta$,
\begin{equation*}
    \forall\, j = 1,2, \; \arg\min_{\vtheta_j}\frac{1}{\abs{\gB}}\sum_{\br{s, a} \in \gB} \mathrm{Huber}\br{ Q_{\vtheta_j}\br{s, a}, \widetilde{Q}(s, a) },
\end{equation*}
where $\mathrm{Huber}(\cdot)$ denote the Huber-loss with threshold conveniently chosen as $500$.
Finally, the norm of the gradient for critic update is clipped to be bounded by $0.1$ for training stability.

As a remark, the mentioned enhanced termination condition, Huber loss and gradient-norm clipping are engineering designs for stable training in combating for not knowing the true reward function.
We use a unified setting for these designs across all tested datasets in all experiments, and thus a per-dataset tuning may further improve our results.

\paragraph{Discriminator training.}
The ``fake'' samples $\gB_{\mathrm{fake}}$ for discriminator training is formed by the following steps: 
first, sample $\abs{\gB}$ states $s \sim \gD$;
second, for each $s$ get one corresponding action $a\sim \pi_\vphi\br{\cdot \given s}$;
third, for each $(s,a)$ get an estimate of the next state $s'$ using $\widehat P$ as $s' \sim \widehat P(\cdot \given s, a)$, with terminal states removed;
fourth, sample one next action $a'$ for each next state $s'$ via $a' \sim \pi_\vphi\br{\cdot \given s'}$. 
The ``fake'' samples $\gB_{\mathrm{fake}}$ is subsequently formed by
\begin{equation*}
\gB_{\mathrm{fake}} \triangleq     
\begin{bmatrix}
    (s\mbox{ }, & a\mbox{ }) \\
    (s', & a')
    \end{bmatrix}.
\end{equation*}
Note that terminal states are removed since by definition, no action choices are needed on them.
The ``true'' samples $\gB_{\mathrm{true}}$ consists of $\abs{\gB_{\mathrm{fake}}}$ state-action samples from $\denv$.

The discriminator $D_\vpsi$ is optimized with learning rate $\eta_\vpsi$ as
\begin{equation*}
    \arg\max_{\vpsi} \frac{1}{\abs{\gB_{\mathrm{true}}}} \sum_{(s,a)\sim \gB_{\mathrm{true}}}\log D_\vpsi(s,a) + \frac{1}{\abs{\gB_{\mathrm{fake}}}} \sum_{(s,a)\sim \gB_{\mathrm{fake}}} \log\br{1-D_\vpsi(s,a)}.
\end{equation*}

\paragraph{Actor training.}
The policy is updated once every $k$ updates of the critic and the discriminator.
Using the ``fake'' samples $\gB_{\mathrm{fake}}$ and the discriminator $D_\vpsi$, the generator loss for actor training is 
\begin{equation*}
    \gL_g(\vphi) = \frac{1}{\abs{\gB_{\mathrm{fake}}}} \sum_{(s,a) \in \gB_{\mathrm{fake}}}\sbr{\log\br{1-D_\vpsi\br{s,a}}}.
\end{equation*}
The policy is optimized with learning rate $\eta_\vphi$ as
\begin{equation*}
    \arg\min_{\vphi} -\lambda \cdot\frac{1}{\abs{\gB}} \sum_{s \in \gB, a \sim \pi_\vphi(\cdot \given s)}\sbr{\min_{j=1,2} Q_{\vtheta_j}(s, a)} + \gL_g(\vphi),
\end{equation*}
where in all tested datasets the regularization coefficient $\lambda \triangleq 10/Q_{avg}$ with soft-updated $Q_{avg}$ and the penalty coefficient $10$ conveniently chosen, similar to \citet{td3bc2021}.

\paragraph{Soft updates.}
We follow \citet{td32018} to soft-update the target-network parameter and the $Q_{avg}$ value,
\begin{equation*}
    \begin{split}
        \vphi' & \leftarrow \beta \vphi + (1-\beta) \vphi', \\
        \vtheta'_j &\leftarrow \beta \vtheta_j + (1-\beta)\vtheta'_j, \quad \forall\, j=1,2, \\
        Q_{avg} &\leftarrow \beta \frac{1}{\abs{\gB}} \sum_{s \in \gB, a \sim \pi_\vphi(\cdot \given s)} \abs{\min_{j=1,2} Q_{\vtheta_j}\br{s, a}} + (1 - \beta) Q_{avg}.
    \end{split}
\end{equation*}
Soft-updates are performed after each iteration.

\paragraph{Warm-start step.} 
We follow \citet{cql2020} and \citet{idac2020} to warm-start our policy training in the first $N_{\mathrm{warm}}$ epochs.
In this step, the policy is optimized \wrt\mbox{} the generator loss $\gL_g(\vphi)$ only, \ie, $\arg\min_\vphi \gL_g(\vphi)$.
The learning rate $\eta_\vphi$ in the warm-start step is the same as the normal training step.

\subsection{Details for the continuous-control experiment} \label{sec:rl_exp_details}

\paragraph{Datasets.}
We evaluate algorithms on the continuous control tasks in the D4RL benchmark \citep{fu2021d4rl}.
Due to limited computational resources, we select therein a representative set of datasets.
Specifically, 
\textbf{(1)} we select the ``medium-expert,'' ``medium-replay,'' and ``medium'' datasets for the Hopper, HalfCheetah, and Walker2d tasks in the Gym-MuJoCo domain, which are commonly used benchmarks in prior work \citep{bcq2019, bear2019, brac2019, cql2020}.
As in recent literature \citep{mabe2021,decisiontrans2021,iql2021}, we do not test on the ``random'' and ``expert'' datasets, as they are considered as less practical \citep{bremen2021} and can be simply solved by directly applying standard off-policy RL algorithms \citep{rem2020} and behavior cloning on the offline dataset.
We note that a data-quality agnostic setting may not align with practical offline RL applications, \ie, one typically should know the quality of the offline datasets, \textit{e.g.}, whether it is collected by random policy or by experts.
\textbf{(2)} Apart from the Gym-MuJoCo domain, we further consider the Maze2D domain of tasks\footnote{We use the tasks ``maze2d-umaze,'' ``maze2d-medium,'' and ``maze2d-large.''} for the non-Markovian data-collecting policy and the Adroit tasks\footnote{We use the tasks ``pen-human,'' ``pen-cloned,'' ``pen-expert,'' and ``door-expert.''} \citep{adroit2018} for their high dimensionality and sparse rewards.

\paragraph{Evaluation protocol.}
Our algorithm is trained for $1000$ epochs, where each epoch consists of $1000$ mini-batch gradient descent steps.
After each epoch of training, a rollout of $10$ episodes is performed.
Both our algorithm and baselines are run under five random seeds $\cbr{0,1,2,3,4}$, and we report the mean and standard deviation of the final rollouts across the five seeds.
We follow \citet{fu2021d4rl} to rerun the baselines under the recommended hyperparameter setting, including dataset-specific hyperparameters if available.

\paragraph{Results of CQL.} \label{sec:tech_cql}
We run CQL using the implementation suggested by \citet{iql2021}, which only provide hyperparameters for the Gym-MuJoCo and AntMaze domains.
For the Maze2D and Adroit datasets, we run both sets of recommended hyperparameters and per-dataset select the better result to report.
We note that hyperparameter settings for the Maze2D and Adroit datasets are also not provided in the original CQL implementation.
An in-depth tuning of CQL on these two domains of datasets is beyond the scope of this paper.

\paragraph{Implicit policy implementation.}
We follow \citet{samplegenerative2016} to choose the noise distribution as the isotropic normal, \ie, $p_z\br{z} \triangleq \gN\br{\vzero, \sigma_{\mathrm{noise}}^2 \mI}$, where the default setting of the dimension of $z$ is $\mathrm{dim}\br{z} = {\tt min(10, \, state\_dim // 2)}$ and of the noise standard deviation is $\sigma_{\mathrm{noise}} = 1$.
In the forward pass of the implicit policy, for a given state $s$, a noise sample $z \sim p_z(z)$ is first independently drawn.
Then $s$ is concatenated with $z$ and the resulting $\sbr{s, z}$ is inputted into the (deterministic) policy network to sample stochastic action.

\paragraph{Terminal states.}
In practice, the episodes in the offline dataset have finite horizon. 
Thus, special treatment is needed for those terminal states in calculating the Bellman update target.
We combine common practice \citep{dqn2013,rlintro2018} with our critic update target Eq.~\eqref{eq:critic_target} as 
\begin{equation*}
    \widetilde{Q}\br{s, a} = \begin{cases}
    r(s,a) + \gamma \cdot \widetilde{Q}\br{s, a} \cdot \br{\abs{\widetilde{Q}\br{s, a}} < 2000} & \text{ if $s'$ is a non-terminal state} \\
    r(s,a) & \text{ if $s'$ is a terminal state}
    \end{cases},
\end{equation*}
since the action-value function is undefined on the terminal $s'$ due to no action choice therein.

Our network architecture is presented in Appendix \ref{sec:algo_details_gan}. 
To save computation, we use simple neural networks as in \citet{bcq2019}.
A fine-tuning of the noise distribution $p_z\br{z}$, the network architecture, the optimizer hyperparameters, \etc, is left for future work.

\subsubsection{Details for the implementation of AMPL}  \label{sec:algo_details_gan}

With the training techniques for GAN suggested by previous work in generative modeling, we are able to stably and effectively train the GAN structure on data with moderate dimension, \eg, the D4RL datasets we consider.
In this paper, we adopt the following techniques from the literature.
\begin{itemize}[leftmargin=*]
    \item Following \citet{gan2014}, $\pi_\vphi$ is trained to maximize
    $
        \E_{(s,a) \in \gB_{\mathrm{fake}}} \sbr{\log\br{D_\vpsi(s,a)}}.
    $
    \item We are motivated by \citet{dcgan2016} to use LeakyReLU activation function in both the generator and the discriminator, with {\tt negative\_slope} being the default value $0.01$.
    \item For stable training, we follow \citet{dcgan2016} to use a reduced momentum term $\beta_1 = 0.4$ in the Adam optimizers of the policy and the discriminator networks; and to use learning rates $\eta_\vphi = \eta_\vpsi = 2 \times 10^{-4}$.
    \item To avoid discriminator overfitting, we are motivated by \citet{improvetechgan2016} and \citet{gantutorial2017} to use one-sided label smoothing with soft and noisy labels.
    Specifically, random numbers in $[0.8, 1.0)$ are used as the labels for the ``true'' sample $\gB_{\mathrm{true}}$.
    No label smoothing is needed for the ``fake'' sample $\gB_{\mathrm{fake}}$, \ie, their labels are all $0$.
    \item Binary cross entropy loss between the labels and the discriminator outputs is used as the loss for the discriminator training in the underlying GAN structure.
    \item We are motivated by TD3 \citep{td32018} and GAN to update the policy $\pi_\vphi(\cdot \given s)$ once per $k$ updates of the critics and the discriminator. 
\end{itemize}

Table~\ref{table:gan_param} shows the shared hyperparameters for all datasets in our experiments.
\emph{Many of these hyperparameters follow the literature without further tuning}.
For example, we use $\eta_\vphi = \eta_\vpsi = 2 \times 10^{-4}$ as in \citet{dcgan2016}, $\eta_\vtheta = 3 \times 10^{-4}$ and $N_{\mathrm{warm}} = 40$ as in \citet{cql2020}, $c=0.75$ as in \citet{bcq2019}, policy frequency $k=2$ as in \citet{td32018}, $f=0.5$ as in \citet{combo2021} and model learning rate $\eta_{\widehat P} = 0.001$ as in \citet{mopo2020}. 
Unless specified, the shared and the non-shared hyperparameters are used throughout the main results and the ablation study.

\begin{table}[ht]
\captionsetup{font=small}
\caption{\small Shared hyperparameters for AMPL.} \label{table:gan_param}
\centering
\begin{tabular}{@{}ll@{}}
\toprule
Hyperparameter & Value \\ \midrule
Optimizer      & Adam  \\
Training iterations & $10^6$ \\
Training iterations for $\omega(s,a)$ & $10^5$ \\
Model retrain period & per $100$ epochs \\
Learning rate $\eta_\vtheta$ &  $3 \times 10^{-4}$  \\ 
Learning rate $\eta_\vphi$, $\eta_\vpsi$ & $2 \times 10^{-4}$      \\ 
Learning rate $\eta_{\widehat P}$ &  $1 \times 10^{-3}$  \\ 
Learning rate $\eta_\omega$ for $\omega(s,a)$ & $1 \times 10^{-6}$ \\
Penalty coefficient & $10$ \\
Batch size & $512$ (as in \citet{optidice2021}) \\
Discount factor $\gamma$ & $0.99$ \\
Target network update rate $\beta$ & $0.005$ \\
Weighting for clipped double Q-learning $c$ & $0.75$ \\
Noise distribution $p_z(z)$ & $\gN\br{\vzero, \sigma_{\mathrm{noise}}^2\mI}$ \\
Default value of $\sigma_{\mathrm{noise}}$ & $1$ \\
Policy frequency $k$ & $2$ \\
Rollout generation frequency & per $250$ iterations \\
Number of model-rollout samples per iteration & $128$ \\
Rollout retain epochs & $5$ \\
Real data percentage $f$ & $0.5$  \\
Warm start epochs $N_{\mathrm{warm}}$ & $40$ \\
Random seeds & $\cbr{0,1,2,3,4}$ \\
\bottomrule
\end{tabular}
\end{table}

Since the tested datasets possess diverse nature, we follow the common practice in offline model-based RL to perform gentle dataset-specific hyperparameter tuning. 
Details are discussed below.

We are motivated by \citet{mbpo2019} and \citet{mopo2020} to consider the rollout horizon $h\in \cbr{1,3,5}$. 
We use $h=1$ for 
halfcheetah-medium, hopper-medium, hopper-medium-replay, maze2d-large, pen-expert, door-expert, pen-human;
$h=3$ for 
hopper-medium-expert, walker2d-medium, halfcheetah-medium-replay, maze2d-umaze, maze2d-medium;
and $h=5$ for
halfcheetah-medium-expert, walker2d-medium-expert, walker2d-medium-replay, pen-cloned.

\citet{act2021} and \citet{zhang2021alignment} show that a larger noise dimension, \eg, $50$, can help learning a more flexible distribution, while a smaller noise dimension, \eg, $1$, can make the distribution leaning towards deterministic. 
Hence we use the default noise dimension for the tasks:
maze2d-umaze, door-expert; 
noise dimension $50$ for the tasks:
hopper-medium-expert, halfcheetah-medium-expert, walker2d-medium-expert, halfcheetah-medium, hopper-medium, walker2d-medium, halfcheetah-medium-replay, hopper-medium-replay, walker2d-medium-replay, maze2d-medium, maze2d-large, pen-expert;
and noise dimension $1$ for the tasks:
pen-cloned, pen-human.

Due to the narrow data distribution, we use $\sigma_{\mathrm{noise}} = 0.001$ for datasets pen-human and pen-cloned to avoid training divergence.

We state below the network architectures of actor, critic, discriminator, and MIW in the implementation of AMPL.
Motivated by the clipped double Q-learning, two critic networks are used with the same architecture.

\begin{multicols}{2}
  \begin{minipage}{0.45\textwidth}
    Actor
\begin{lstlisting}
Linear(state_dim + noise_dim, 400)
LeakyReLU
Linear(400, 300)
LeakyReLU
Linear(300, action_dim)
max_action $\times$ tanh
\end{lstlisting}
  \end{minipage}
  
\begin{minipage}{0.45\textwidth}
Critic 
\begin{lstlisting}
Linear(state_dim + action_dim, 400)
LeakyReLU
Linear(400, 300)
LeakyReLU
Linear(300, 1)
\end{lstlisting}
\end{minipage}
\end{multicols}

\begin{multicols}{2}
\begin{minipage}{0.45\textwidth}
Discriminator
\begin{lstlisting}
Linear(state_dim + action_dim, 400)
LeakyReLU
Linear(400, 300)
LeakyReLU
Linear(300, 1)
Sigmoid
\end{lstlisting}
\end{minipage}

\begin{minipage}{0.45\textwidth}
MIW
\begin{lstlisting}
Linear(state_dim + action_dim, 400)
LeakyReLU
Linear(400, 300)
LeakyReLU
Linear(300, 1)
(softplus($\cdot$ - $10^{-8}$) + $10^{-8}$).power($\alpha$)
\end{lstlisting}
\end{minipage}
\end{multicols}

In the MIW network, $\alpha$ is the smoothing exponent to smooth the distribution of the weight estimates in the offline dataset, whose design is motivated by \citet{wmopo2021}. 
We fix $\alpha$ as $0.5$ on the MuJoCo datasets and as $0.2$ on the Maze2D and Adroit datasets.
The output of the MIW network prior to ${\tt power}(\alpha)$ is lower bounded by $10^{-8}$ to avoid gradient explode.
We follow \citet{coindice2020} to initialize the weights of last layer of the MIW network from $\mathrm{Uniform}\br{-0.003, 0.003}$, and bias as $0$.

\subsection{Details for the implementation of alternative MIW estimation methods
in \Secref{sec:exp_ablation}}\label{sec:dice_details}

We implement and train the alternative methods VPM, GenDICE, and DualDICE in \Secref{sec:exp_ablation} based on the corresponding papers and source codes.
To stabilize training, the design of target MIW network in \Secref{sec:algo_details_td_dice} is applied wherever applicable.
The target MIW network is hard-updated, as suggested by the corresponding papers.
The $\nu$ network in GenDICE and DualDICE has the same architecture as the critic network and has its own hard-updated target network.
We follow the official implementation of GenDICE and DualDICE to clip the gradient values by $1$.
For the convex function $f$ in DualDICE, we use $f(x) = \frac{2}{3}\abs{x}^{3/2}$, as suggested in its Section 5.3.

\section{Potential negative societal impacts}\label{sec:neg_impact}
Since offline RL has strong connection with the supervised learning, offline RL methods, including the proposed AMPL in this paper, are subject to the bias in the offline dataset.
The data-bias can be exploited by both the dynamic-model learning and the policy training.

\section{Computing resources}\label{sec:compting_resouces}
We ran all experiments on 4 NVIDIA Quadro RTX 5000 GPUs.



\end{document}